\documentclass{article}

\PassOptionsToPackage{numbers, compress}{natbib}

\usepackage[preprint]{neurips_2026}

\usepackage[utf8]{inputenc}
\usepackage[T1]{fontenc}
\usepackage{hyperref}
\usepackage{url}
\usepackage{booktabs}
\usepackage{graphicx}
\usepackage{amsfonts}
\usepackage{nicefrac}
\usepackage{microtype}
\usepackage{xcolor}
\usepackage{amsmath,amssymb,amsthm}
\usepackage{mathtools}
\usepackage{adjustbox}
\usepackage{enumitem}
\usepackage[most]{tcolorbox}

\theoremstyle{plain}
\newtheorem{proposition}{Proposition}
\newtheorem{lemma}[proposition]{Lemma}
\newtheorem{corollary}[proposition]{Corollary}

\theoremstyle{remark}

\title{APEX: Amplitude Anchors and Phase Priors for Target-Scarce Higher-Frequency Wave Prediction}

\author{
Yifan Sun \quad Lei Cheng \quad Sijie Chen \quad Ting Zhang \quad Jianlong Li \quad Shikai Fang\\[3pt]
College of Information Science and Electronic Engineering, Zhejiang University
}

\begin{document}

\maketitle

\begin{abstract}
Learning-based surrogates have become increasingly effective for wave-field prediction, and neural operators in particular have shown strong performance within observed frequency regimes. 
However, higher-frequency prediction under scarce target supervision remains comparatively underexplored, especially in wave problems where higher-frequency data are substantially more expensive to simulate or measure than lower-frequency data. 
A central difficulty is that cross-frequency transfer is inherently asymmetric: coarse amplitude structure remains relatively stable across frequencies, whereas phase-sensitive oscillatory structure deteriorates much more rapidly as frequency increases. 
Motivated by this asymmetry, we propose \textbf{APEX}, \underline{\textbf{A}}mplitude-anchored and \underline{\textbf{P}}hase-prior-guided \underline{\textbf{E}}nhancement from e\underline{\textbf{X}}trapolated coarse predictions,
a framework for target-scarce higher-frequency wave-field prediction.
A lower-frequency neural operator first provides a coarse prediction in the target-frequency regime, from which we retain only the amplitude as a transferable structural anchor. A conditional flow-matching enhancer then reconstructs the target higher-frequency field under the guidance of a Green's-function-inspired phase prior. Experiments on SimpleWave, Helmholtz, and Maxwell benchmarks show that APEX consistently outperforms direct lower-to-higher extrapolation, target-adapted operator, and joint generative baselines under limited target-frequency supervision. Our results suggest that reliable higher-frequency prediction of oscillatory wave fields should not rely on direct end-to-end transfer of the full complex field, but instead on explicitly reusing transferable coarse structure while separately recovering the missing oscillatory detail.
\end{abstract}

\section{Introduction}

Learning-based surrogates have become a powerful paradigm for approximating solution maps of parameterized PDEs, with broad impact across fluid dynamics, electromagnetics, acoustics, and other scientific domains~\cite{gu2022neurolight,zou2024deep,hao2023gnot}. 
Among these approaches, neural operators have emerged as a particularly effective class of models for learning nonlocal mappings in function space, making them a natural backbone for wave-field prediction~\cite{lifourier,kovachki2023neural,lu2021learning}. 
More broadly, this places wave-field modeling in the growing problem of \emph{cross-frequency generalization} for learned surrogates: models are expected not only to interpolate within observed spectral regimes, but also to remain reliable as the queried frequency changes. 
In many practical wave problems, this challenge becomes especially important at higher frequencies, which are substantially more expensive to simulate or measure~\cite{IHLENBURG19959}.

Despite rapid progress in learned PDE surrogates and neural operators, this higher-frequency regime remains comparatively underexplored. 
Most existing studies on cross-frequency wave prediction focus on interpolation within observed spectral ranges, adaptation with target-regime supervision, or general in-range surrogate accuracy~\cite{gu2022neurolight,seo2024wave,sun2025hankel,wang2024transfer}. 
In this paper, we focus on a particularly important regime of this broader problem: \emph{higher-frequency prediction under scarce target supervision}. 
This setting arises naturally in wave problems because higher-frequency samples typically require finer discretization, stricter numerical control, and substantially higher simulation or measurement cost~\cite{IHLENBURG19959}. 
As a result, practitioners often have abundant lower-frequency data but only limited supervision in the higher-frequency target regime.

A straightforward strategy in this setting is to train a surrogate on abundant lower-frequency data and directly apply it at unseen higher frequencies, i.e., \emph{direct lower-to-higher extrapolation}. 
Another common strategy is to adapt the lower-frequency model using the limited target-frequency data that can be afforded~\cite{sun2025hankel,wang2024transfer}. 
In practice, however, both approaches often remain unreliable in highly oscillatory higher-frequency regimes, where wave responses are typically more sensitive to propagation mismatch~\cite{Keller1962diffraction,sethian1996afast}. 
More importantly, what remains insufficiently understood is not merely whether lower-to-higher prediction degrades, but \emph{which} components of the complex field remain transferable and \emph{which} ones fail first under frequency shift.
This missing mechanism-level understanding makes it difficult to design models that generalize reliably beyond the observed regime.

In this work, we study higher-frequency prediction under scarce target supervision through the lens of transferability across field components. 
Our central finding is that cross-frequency transfer is inherently asymmetric: coarse amplitude structure remains substantially more stable across frequencies, while phase-sensitive oscillatory structure deteriorates much more rapidly as the frequency gap increases. 
This asymmetry appears both in the ground-truth cross-frequency similarity structure and in the outputs of neural operators trained on lower frequencies and directly queried at higher frequencies. 
Motivated by this finding, we propose \textbf{APEX}, \underline{\textbf{A}}mplitude-anchored and \underline{\textbf{P}}hase-prior-guided \underline{\textbf{E}}nhancement from e\underline{\textbf{X}}trapolated coarse predictions,
a framework for target-scarce higher-frequency wave-field prediction.
APEX uses a frozen lower-frequency Fourier neural operator (FNO) to provide a coarse amplitude anchor, and then reconstructs the target higher-frequency field using a conditional flow-matching enhancer~\cite{lipman2023flow,tong2024improving} guided by a Green's-function-inspired phase prior. 
Experiments on SimpleWave, Helmholtz, and Maxwell benchmarks show that APEX consistently outperforms direct lower-to-higher extrapolation, target-adapted operator, and joint generative baselines under limited target-frequency supervision~\cite{wang2024transfer,sun2025hankel,huang2024diffusionpde}. 
More broadly, our results suggest that reliable higher-frequency prediction should not rely on direct end-to-end transfer of the full complex field, but instead on explicitly reusing transferable coarse structure while separately recovering the missing oscillatory detail.

Our contributions are summarized as follows:
\begin{itemize}
	\item We identify a key \emph{transfer asymmetry} in higher-frequency wave-field prediction: coarse amplitude remains substantially more stable across frequencies than phase-sensitive oscillatory structure.
	\item Based on this finding, we propose \textbf{APEX}, a framework that decomposes higher-frequency prediction under scarce target supervision into coarse structural transfer and oscillatory reconstruction.
	\item Experiments on SimpleWave, Helmholtz, and Maxwell benchmarks show that APEX consistently improves higher-frequency prediction over direct lower-to-higher extrapolation, target-adapted operator, and joint generative baselines.
\end{itemize}
\section{Why Direct Higher-Frequency Extrapolation Fails}

\subsection{Wave Equation and Green’s Function}

We consider parameterized frequency-domain wave equations of the form
\begin{equation}
	\mathcal{L}_{e,\nu}[u](\mathbf r)=s(\mathbf r;e,\nu),
	\qquad \mathbf r\in\Omega,
	\label{eq:general_pde}
\end{equation}
where $\mathbf r$ denotes spatial coordinates, $\Omega$ is the spatial domain, $s(\mathbf r;e,\nu)$ is the source term, $e$ denotes the environment and geometry, $\nu$ is the spectral variable, and $u$ is the resulting complex-valued field. Throughout this work, $\nu$ is taken to be positive, corresponding to a physical frequency, wavenumber, or dimensionless spectral parameter depending on the benchmark. This form covers standard frequency-domain wave models, including Maxwell and Helmholtz equations \cite{colton2013inverse}, as well as the SimpleWave simulator used in this work.

For linear frequency-domain problems, the field admits a Green's-function representation of the form~\cite{colton2013inverse,stakgold2011greens}
\begin{equation}
	u(\mathbf r;e,\nu)
	=
	\int_{\Omega} G_{e,\nu}(\mathbf r,\mathbf r')\,s(\mathbf r';e,\nu)\,d\mathbf r',
	\label{eq:general_green}
\end{equation}
where $G_{e,\nu}(\mathbf r,\mathbf r')$ is the associated Green's function. Eq.~\eqref{eq:general_green} is introduced here not to develop a full Green's-function analysis, but to emphasize a structural fact that will matter later: frequency-domain wave fields are organized by source-dependent propagation geometry. This viewpoint motivates why, in the higher-frequency regime, geometric oscillatory cues can be useful even when a full direct surrogate prediction becomes unreliable.

\begin{figure}[t]
	\centering
	\adjustbox{max width=0.98\textwidth}{%
		\shortstack[c]{%
			\includegraphics[height=0.26\textwidth,keepaspectratio]{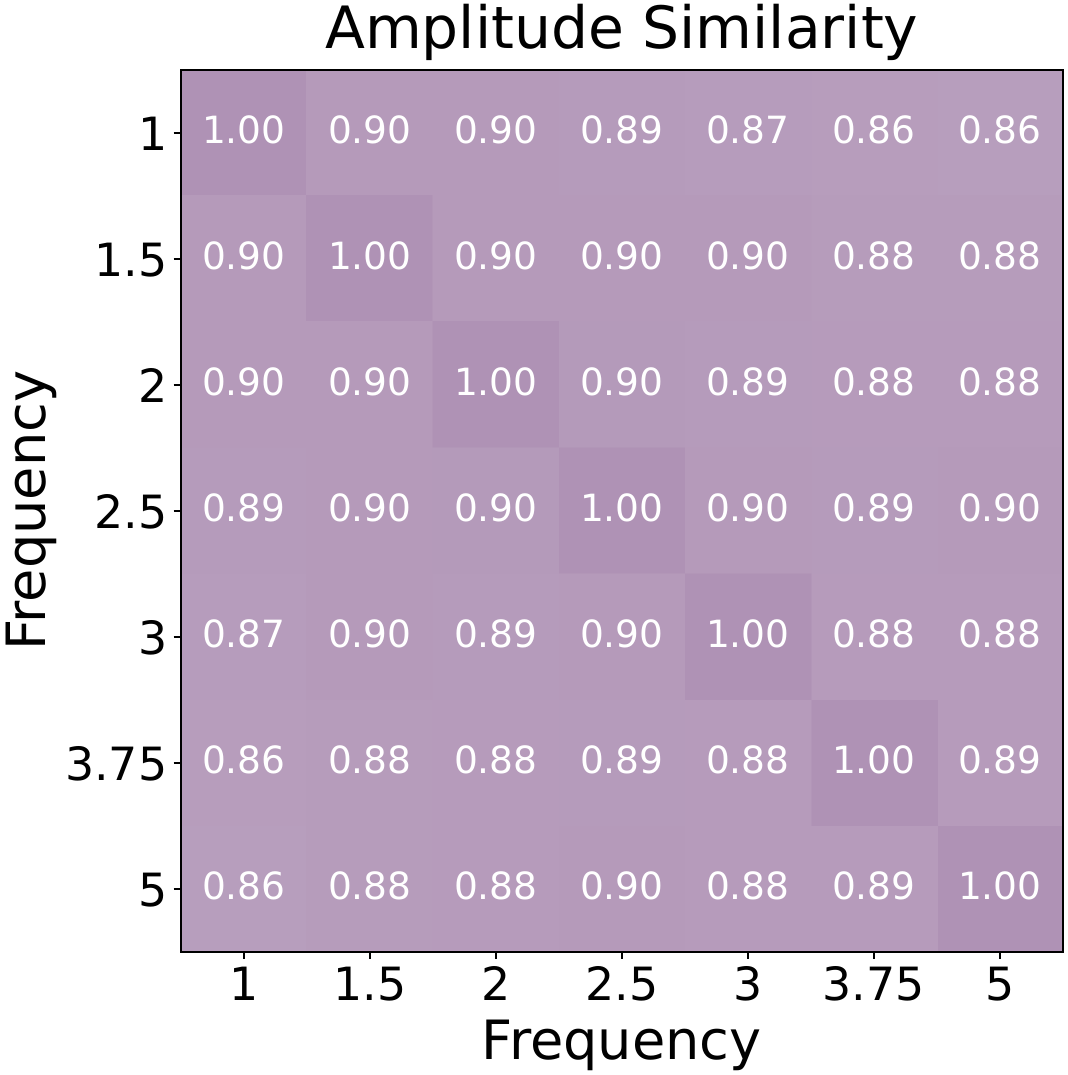}\\[-0.2ex]
			\hspace*{1.5em}(a)}
		\hspace{0.005\textwidth}
		\shortstack[c]{%
			\includegraphics[height=0.26\textwidth,keepaspectratio]{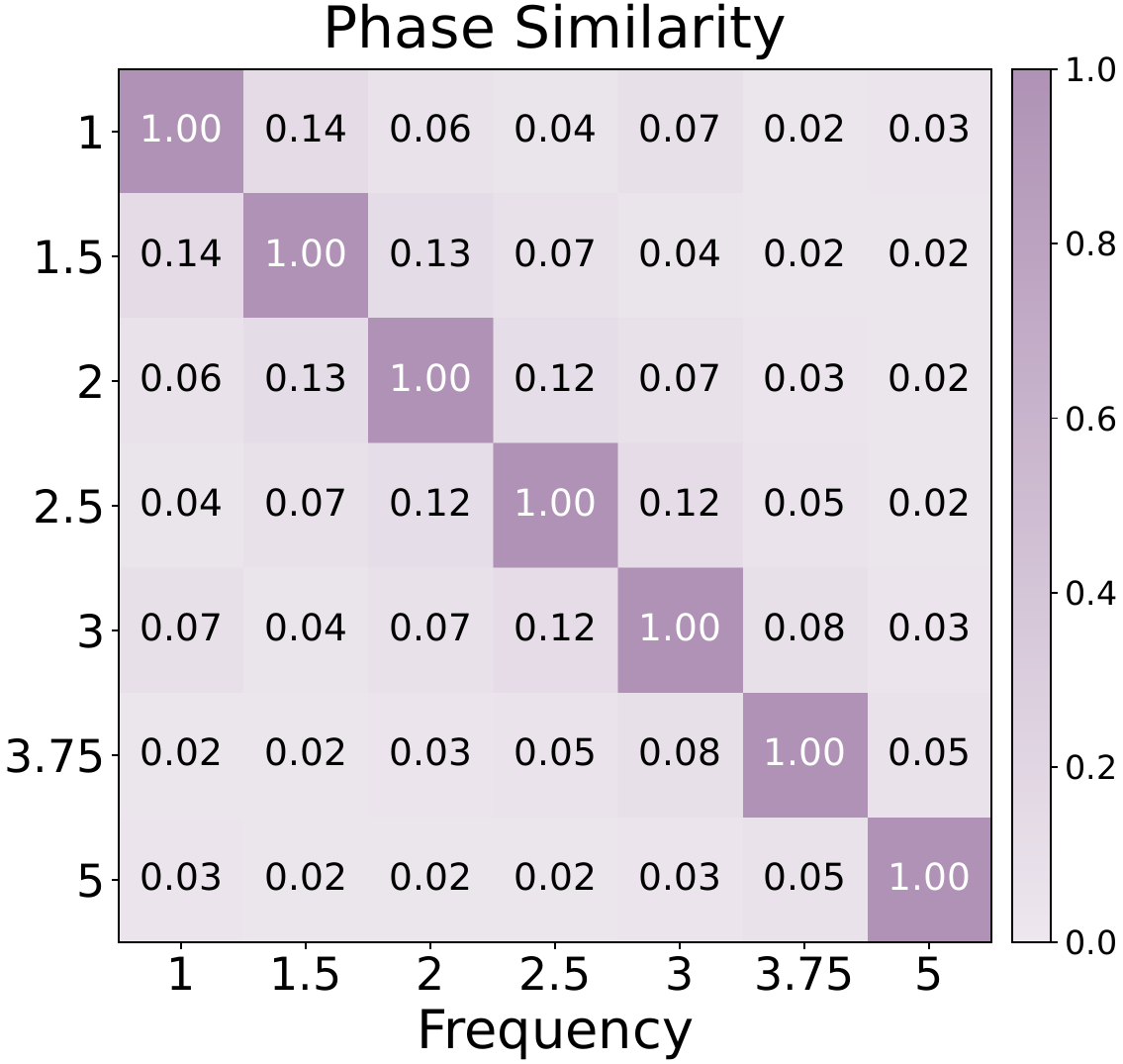}\\[-0.2ex]
			(b)}
		\hspace{0.01\textwidth}
		\shortstack[c]{%
			\includegraphics[height=0.26\textwidth,keepaspectratio]{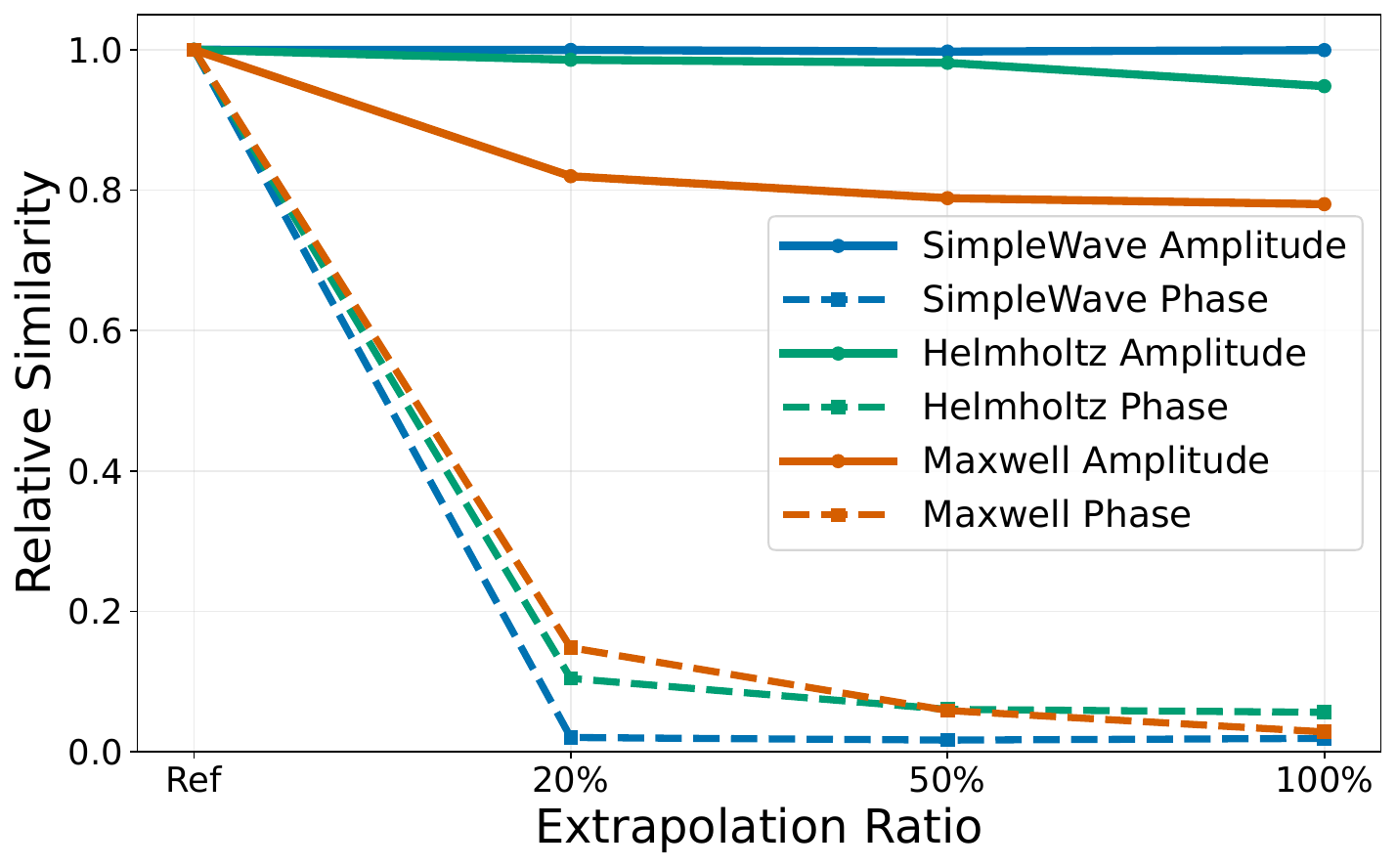}\\[-0.2ex]
			\hspace*{1.5em}(c)}
	}
	\caption{Empirical evidence for cross-frequency transfer asymmetry. (a,b) Ground-truth amplitude and phase similarity on the Maxwell benchmark. (c) Relative amplitude and phase similarity of lower-frequency FNO extrapolation across the three benchmarks.}
	\label{fig:em_similarity}
\end{figure}

\subsection{Neural Operators and Direct Lower-to-Higher Extrapolation}
\label{sec:2.2}

Neural operators provide surrogate models for PDE solution operators.
A common neural-operator layer can be written as~\cite{kovachki2023neural,lifourier}
\begin{equation}
	v_{\ell+1}(\mathbf r)
	=
	\sigma\!\left(
	W_\ell v_\ell(\mathbf r)
	+
	\int_{\Omega}\kappa_\ell(\mathbf r,\mathbf r';e,\nu)\,v_\ell(\mathbf r')\,d\mathbf r'
	\right),
	\label{eq:operator_kernel_layer}
\end{equation}
where $v_\ell$ denotes the latent field at layer $\ell$, $W_\ell$ is a linear transformation, $\sigma(\cdot)$ is a nonlinear activation function, and $\kappa_\ell$ parameterizes an integral kernel.

In the cross-frequency setting, let $\mathcal V_{\mathrm{train}}=\{\nu_1,\ldots,\nu_m\}$ denote the observed training frequencies, and let a neural operator $F_\theta$ be learned from samples $\{(e, \nu, u(\mathbf r;e,\nu)):\nu\in\mathcal V_{\mathrm{train}}\}$. When the trained model is queried at a new frequency $\tilde\nu\notin\mathcal V_{\mathrm{train}}$,
\begin{equation}
	\hat u(\mathbf r;e,\tilde\nu)=F_\theta(e,\tilde\nu)(\mathbf r),
	\label{eq:cross_freq_operator}
\end{equation}
the problem becomes \emph{direct lower-to-higher extrapolation} if $\tilde\nu$ lies outside the observed frequency range. This is the naive strategy studied in this paper: train on lower frequencies and directly query the learned operator at higher ones. Existing work has mainly focused on interpolation or in-range prediction~\cite{gu2022neurolight,seo2024wave}, whereas this out-of-range higher-frequency regime remains much less explored. The key question is therefore not only whether the prediction degrades, but also \emph{how} that degradation is distributed across the complex field representation.

\subsection{The Asymmetry of Cross-Frequency Transferability}
\label{sec:asymmetry_transferability}

We begin from an empirical observation: cross-frequency variation does not affect all components of a complex wave field equally. To visualize this effect, we first compare the same Maxwell sample across different frequencies using separate similarity measures for amplitude and phase. Let $A(\mathbf r;\nu)$ and $\phi(\mathbf r;\nu)$ denote the amplitude and phase of the field, respectively. For two frequencies $\nu_i$ and $\nu_j$, we measure amplitude similarity by
\begin{equation}
	S_A(\nu_i,\nu_j)
	=
	\frac{\left\langle A(\mathbf r;\nu_i),A(\mathbf r;\nu_j)\right\rangle}
	{\|A(\mathbf r;\nu_i)\|_2\,\|A(\mathbf r;\nu_j)\|_2},
\end{equation}
and phase similarity by the phase-factor coherence
\begin{equation}
	S_P(\nu_i,\nu_j)
	=
	\left|
	\frac{1}{N}
	\sum_{\mathbf r\in\Omega}
	\exp\!\left(i\big(\phi(\mathbf r;\nu_i)-\phi(\mathbf r;\nu_j)\big)\right)
	\right|.
\end{equation}
The resulting ground-truth similarity maps are shown in Fig.~\ref{fig:em_similarity}(a)--(b). They reveal a clear asymmetry: amplitude remains comparatively structured across frequency pairs, whereas phase coherence decays much more rapidly as the frequency gap increases.

We next ask whether the same asymmetry also appears in learned prediction. We train an FNO on lower frequencies and evaluate it at unseen higher frequencies. 
Fig.~\ref{fig:em_similarity}(c) shows relative amplitude and phase similarity across SimpleWave, Helmholtz, and Maxwell benchmarks, obtained by normalizing each higher-frequency prediction similarity by that at the highest observed lower frequency.
Benchmark details are provided in Appendix~\ref{app:dataset_details}. 
In all three cases, amplitude similarity remains substantially higher than phase similarity as the target frequency moves away from the observed regime. 
Thus, direct lower-to-higher extrapolation does not fail uniformly across the complex field: the phase-sensitive component breaks down earlier and more severely than the amplitude component.

The empirical evidence above suggests that the central difficulty of higher-frequency prediction is not merely larger error, but uneven degradation across different components of the complex field. To formalize this asymmetry, we adopt a local amplitude--phase abstraction. Let $u$ and $\hat u$ denote the true and predicted fields, respectively, and write
\begin{equation}
	u(\mathbf r,\nu)=A(\mathbf r;\nu)\exp\bigl(i\phi(\mathbf r;\nu)\bigr),
	\qquad
	\hat u(\mathbf r,\nu)=\hat A(\mathbf r;\nu)\exp\bigl(i\hat\phi(\mathbf r;\nu)\bigr).
\end{equation}
In the local abstraction below, we express phase through effective travel-time functions, i.e., $\phi(\mathbf r;\nu)\approx \nu\tau(\mathbf r)$ and $\hat\phi(\mathbf r;\nu)\approx \nu\hat\tau(\mathbf r)$, and define
\begin{equation}
	\Delta\tau(\mathbf r):=\hat\tau(\mathbf r)-\tau(\mathbf r).
\end{equation}
Since our interest is in whole-field discrepancy rather than a single spatial point, we measure the error over a measurable region $S\subseteq\Omega$ by
\begin{equation}
	\|\hat u-u\|_{L^2(S)}^2
	=
	\int_S |\hat u(\mathbf r,\nu)-u(\mathbf r,\nu)|^2\,d\mathbf r.
\end{equation}
This regional quantity is more natural than a pointwise error here because both the empirical observations in Fig.~\ref{fig:em_similarity} and the prediction task itself concern the field as a whole. The result below follows by integrating the corresponding pointwise amplitude--phase decomposition over $S$; the pointwise form and proof are given in Appendix~\ref{app:proof_freq_amp_phase}.

\begin{proposition}[Regional decomposition of complex-field error]
\label{prop:regional_exact_decomp}
For any measurable region $S\subseteq\Omega$,
\begin{equation}
	\|\hat u-u\|_{L^2(S)}^2
	=
	\|\hat A-A\|_{L^2(S)}^2
	+
	4\int_S A(\mathbf r;\nu)\hat A(\mathbf r;\nu)\,
	\sin^2\!\left(\frac{\nu\Delta\tau(\mathbf r)}{2}\right)\,d\mathbf r.
	\label{eq:regional_exact_decomp}
\end{equation}
\end{proposition}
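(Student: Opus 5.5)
The plan is to prove the identity pointwise first and then integrate over $S$. Fix $\mathbf r\in S$ and abbreviate $A=A(\mathbf r;\nu)$, $\hat A=\hat A(\mathbf r;\nu)$, and $\delta=\hat\phi(\mathbf r;\nu)-\phi(\mathbf r;\nu)$. Within the local abstraction $\phi=\nu\tau$, $\hat\phi=\nu\hat\tau$ adopted just before Proposition~\ref{prop:regional_exact_decomp}, we have $\delta=\nu\Delta\tau(\mathbf r)$. The identity should therefore be read as exact once the phases are written in travel-time form; the approximation is inherited from that abstraction, not introduced by the proof. To be safe I would state the pointwise lemma with $\delta=\hat\phi-\phi$ and only substitute $\nu\Delta\tau$ at the end.

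For the pointwise identity, expand the squared modulus:
\[
|\hat A e^{i\hat\phi}-A e^{i\phi}|^2=\hat A^2+A^2-2A\hat A\cos\delta .
\]
This follows by multiplying out $(\hat A e^{i\hat\phi}-Ae^{i\phi})(\hat A e^{-i\hat\phi}-Ae^{-i\phi})$ and using $e^{i\delta}+e^{-i\delta}=2\cos\delta$. Next, write $\hat A^2+A^2=(\hat A-A)^2+2A\hat A$, which gives
\[
|\hat u-u|^2=(\hat A-A)^2+2A\hat A(1-\cos\delta).
\]
The half-angle identity $1-\cos\delta=2\sin^2(\delta/2)$ then yields
\[
|\hat u-u|^2=(\hat A-A)^2+4A\hat A\sin^2\bigl(\tfrac{\nu\Delta\tau}{2}\bigr).
\]
Here $A,\hat A\ge 0$ because they are moduli, so the polar representation is always available. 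It is also harmless when $A=0$ or $\hat A=0$: the phase is then undefined, but the cross term vanishes.

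Finally, integrate over $S$. Measurability and integrability follow from assuming $u,\hat u\in L^2(S)$. Then $A,\hat A\in L^2(S)$, the product $A\hat A$ is integrable by Cauchy--Schwarz, and the factor $\sin^2$ is bounded by $1$. So every term in the sum is finite, and linearity of the integral gives Eq.~\eqref{eq:regional_exact_decomp}.

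The main point of care is not computational. It is the interpretation of $\Delta\tau$: the equality is exact only in the sense that $\nu\Delta\tau$ is the phase discrepancy modulo $2\pi$. Because $\sin^2(\cdot/2)$ is $2\pi$-periodic in its argument, phase wrapping does not affect the identity, and I would note this explicitly.
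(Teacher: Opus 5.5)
Your proposal is correct and follows essentially the same route as the paper. The paper first proves the pointwise identity as a separate lemma: it expands $|\hat A e^{i\nu\hat\tau}-Ae^{i\nu\tau}|^2=\hat A^2+A^2-2A\hat A\cos(\nu\Delta\tau)$, applies $\cos x=1-2\sin^2(x/2)$, and then integrates over $S$. Your remarks on integrability (via $u,\hat u\in L^2(S)$ and Cauchy--Schwarz) and on the $2\pi$-periodicity of $\sin^2(\cdot/2)$ handling phase wrapping are extra care that the paper leaves implicit.
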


Proposition~\ref{prop:regional_exact_decomp} makes the asymmetry explicit at the level of the regional field error. Amplitude mismatch contributes through a direct term, whereas phase-sensitive mismatch contributes through an oscillatory term whose argument is scaled by frequency. The two components therefore enter higher-frequency prediction error through qualitatively different mechanisms.

The following corollary further highlights the frequency dependence through a simple upper bound.

\begin{corollary}[Frequency-sensitive upper bound for the regional field error]
\label{cor:regional_upper_bound}
If $A(\mathbf r;\nu)\le A_{\max}$ and $\hat A(\mathbf r;\nu)\le \hat A_{\max}$ on $S$, then
\begin{equation}
	\|\hat u-u\|_{L^2(S)}^2
	\le
	\|\hat A-A\|_{L^2(S)}^2
	+
	\nu^2 A_{\max}\hat A_{\max}\,
	\|\Delta\tau\|_{L^2(S)}^2.
	\label{eq:regional_uniform_upper_bound}
\end{equation}
\end{corollary}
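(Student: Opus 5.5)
The plan is to start from the exact identity in Proposition~\ref{prop:regional_exact_decomp} and bound only its second, phase-sensitive term; the amplitude term $\|\hat A-A\|_{L^2(S)}^2$ is carried over unchanged. The whole argument reduces to two elementary pointwise estimates on $S$, followed by integration.

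\emph{Step 1 (amplitude weight).} Amplitudes are moduli, so $A(\mathbf r;\nu)\ge 0$ and $\hat A(\mathbf r;\nu)\ge 0$. The weight $A\hat A$ is therefore nonnegative. Under the hypotheses it satisfies
\[
0\le A(\mathbf r;\nu)\hat A(\mathbf r;\nu)\le A_{\max}\hat A_{\max}
\]
pointwise on $S$. Since $\sin^2(\cdot)\ge 0$ as well, the integrand
\[
A\hat A\,\sin^2\!\left(\frac{\nu\Delta\tau}{2}\right)
\]
is nonnegative. Replacing $A\hat A$ by $A_{\max}\hat A_{\max}$ can only increase the integral.

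\emph{Step 2 (phase factor).} Use the elementary inequality $|\sin x|\le |x|$ for all real $x$, hence $\sin^2 x\le x^2$. With $x=\nu\Delta\tau(\mathbf r)/2$ this gives
\[
4\sin^2\!\left(\frac{\nu\Delta\tau(\mathbf r)}{2}\right)\le \nu^2\,\Delta\tau(\mathbf r)^2 .
\]
Here $\nu>0$ is constant in $\mathbf r$ and can be pulled out of the integral.

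\emph{Step 3 (combine).} Chaining the two pointwise bounds yields
\[
4\int_S A\hat A\,\sin^2\!\left(\frac{\nu\Delta\tau}{2}\right)d\mathbf r
\;\le\;
\nu^2 A_{\max}\hat A_{\max}\int_S \Delta\tau(\mathbf r)^2\,d\mathbf r
\;=\;
\nu^2 A_{\max}\hat A_{\max}\,\|\Delta\tau\|_{L^2(S)}^2 .
\]
Substituting this into \eqref{eq:regional_exact_decomp} gives \eqref{eq:regional_uniform_upper_bound}.

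There is no real obstacle. The only points needing care are that $A$ and $\hat A$ are nonnegative, so the weight can be bounded by its supremum, and that the right-hand side may be $+\infty$ if $\Delta\tau\notin L^2(S)$, in which case the bound holds trivially. The estimate is also consistent with the abstraction $\phi\approx\nu\tau$ used to define $\Delta\tau$ in the proposition.
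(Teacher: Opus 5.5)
Your proposal is correct and follows the paper's proof essentially step for step. Like the paper, you start from the exact identity of Proposition~\ref{prop:regional_exact_decomp}, bound $\sin^2(\nu\Delta\tau/2)\le \nu^2\Delta\tau^2/4$, bound the weight $A\hat A$ by $A_{\max}\hat A_{\max}$, and integrate; your use of $|\sin x|\le|x|$ and of the nonnegativity of $A,\hat A$ also makes explicit two points the paper leaves implicit (it applies $\sin z\le z$ for $z\ge 0$ and squares without comment).
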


Corollary~\ref{cor:regional_upper_bound} is explanatory rather than tight: even with controlled amplitude mismatch,
phase-sensitive propagation mismatch carries an explicit frequency-dependent factor and can grow
rapidly in the higher-frequency regime.


\begin{tcolorbox}[
	enhanced,
	colback=gray!10,
	colframe=black,
	boxrule=0.7pt,
	arc=7pt,
	left=10pt,right=10pt,top=7pt,bottom=7pt,
	drop shadow={black!12!white}
	]
	\textbf{Design Guideline.}
	Cross-frequency transfer in complex wave fields is inherently asymmetric: coarse amplitude remains comparatively transferable across frequencies, whereas phase-sensitive oscillatory detail is substantially more fragile outside the observed regime. Accordingly, higher-frequency prediction should reuse transferable coarse structure and separately recover the missing oscillatory detail.
\end{tcolorbox}
\section{Method}

\begin{figure*}[t]
	\centering
	\includegraphics[width=\textwidth]{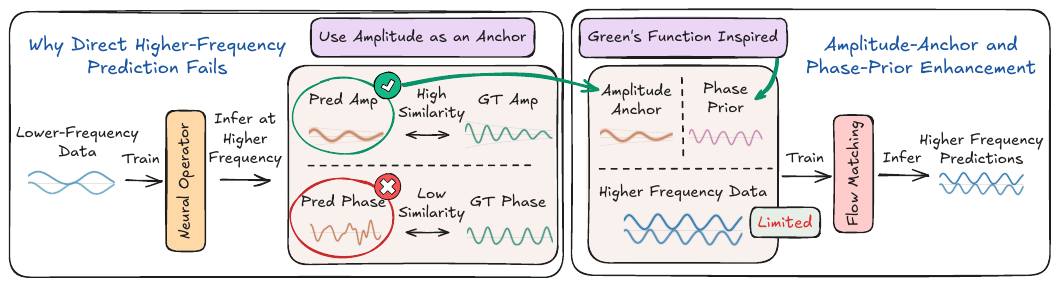}
	\caption{Overview of the proposed framework. Left: direct higher-frequency prediction by a lower-frequency neural operator preserves amplitude structure more reliably than phase. 
    Right: the extrapolated coarse amplitude is retained as an anchor and combined with a Green's-function-inspired phase prior to guide conditional flow matching for higher-frequency enhancement.}
	\label{fig:pipeline}
\end{figure*}

\subsection{Problem Setup}

We consider higher-frequency prediction under scarce target supervision for parameterized wave fields. Let $e\in\mathcal E$ denote an environment, $\nu\in\mathcal V$ denote the spectral variable, and $u(\mathbf r;e,\nu)\in\mathbb C^{H\times W}$ denote the target complex-valued field on a spatial grid $\mathbf r$. We partition the spectrum into a lower-frequency source regime $\mathcal V_{\mathrm{LF}}$ and a disjoint higher-frequency target regime $\mathcal V_{\mathrm{HF}}$, and aim to predict $u(\mathbf r;e,\nu)$ for $\nu\in\mathcal V_{\mathrm{HF}}$ using supervision concentrated primarily in $\mathcal V_{\mathrm{LF}}$ together with limited target-frequency samples.

Motivated by the transfer asymmetry established in Sec.~\ref{sec:asymmetry_transferability}, our design separates higher-frequency prediction into two parts: reusing the comparatively transferable coarse structure from a frozen lower-frequency backbone, and recovering the missing oscillatory detail using a conditional enhancement model. The overall pipeline is illustrated in Fig.~\ref{fig:pipeline}.

\subsection{Coarse Amplitude Anchor}
\label{sec:3.2}
We first extract the transferable component of lower-to-higher prediction. 
A neural operator $F_\theta$ is trained on the lower-frequency regime $\mathcal V_{\mathrm{LF}}$, frozen after training, and then evaluated at target frequencies $\nu\in\mathcal V_{\mathrm{HF}}$. 
Its output is treated as a coarse intermediate prediction rather than as the final higher-frequency solution. We define the corresponding coarse amplitude anchor by
\begin{equation}
	a_{\mathrm{coarse}}(\mathbf r;e,\nu):=
	\log\left(\big|F_\theta(e,\nu)(\mathbf r)\big|+\epsilon\right)
	\in\mathbb R^{H\times W}.
	\label{eq:coarse_anchor}
\end{equation}
Here, \(\epsilon\) is a small positive constant for numerical stability. We retain only the log-amplitude of the extrapolated prediction as a coarse target-regime structural reference, leaving phase-sensitive oscillations to subsequent refinement. 
In this work, \(F_\theta\) is instantiated with FNO for simplicity, but the amplitude-anchor construction is backbone-agnostic in principle and can be built on any surrogate that provides a coarse target-frequency prediction.

\subsection{Green's-Function-Inspired Phase Prior}
\label{sec:3.3}

The coarse amplitude anchor captures transferable large-scale structure, but it does not by itself provide the oscillatory phase information required in the higher-frequency regime. We therefore introduce an explicit phase prior that supplies a lightweight physical guide for the missing phase-sensitive detail. Its role is not to reproduce the full target field, but to provide a compact oscillatory scaffold consistent with the source, the geometry, and an effective phase scale.

Our construction is based on a simple geometric-propagation view of phase: when a small number of dominant propagation paths govern the main oscillatory behavior, the accumulated phase along each path can be approximated by an effective phase coefficient multiplied by the corresponding propagation distance. Based on this intuition, we use the analytic surrogate
\begin{equation}
	G_{\mathrm{prior}}(\mathbf r,\mathbf r_s;e,\nu)
	=
	\sum_{m=1}^{M}
	a_m
	\exp\bigl(
	i\,\kappa_{\mathrm{ref}}(e,\nu)\,L_m(\mathbf r,\mathbf r_s)
	\bigr),
	\label{eq:green_prior}
\end{equation}
where $\mathbf r_s$ denotes the source location, $M$ is the number of retained dominant propagation paths, $L_m(\mathbf r,\mathbf r_s)$ is the geometric path length from the source to location $\mathbf r$, $\kappa_{\mathrm{ref}}(e,\nu)$ is a domain-adaptive reference phase coefficient, and $a_m$ is a fixed coefficient associated with the $m$-th retained path. 
Eq.~\eqref{eq:green_prior} is a unified low-complexity surrogate for phase accumulation, rather than a full approximation of the Green's function or the target field. The retained paths are therefore chosen to capture the dominant propagation geometry rather than to enumerate all physical paths.
Concrete instantiations for the experimental domains are provided in Appendix~\ref{app:phase_prior_inst}.

Since transferable amplitude is already supplied by Sec.~\ref{sec:3.2}, we use only the surrogate phase:
\begin{equation}
	\phi_{\mathrm{base}}(\mathbf r;e,\nu)
	:=
	\operatorname{Arg}\bigl(
	G_{\mathrm{prior}}(\mathbf r,\mathbf r_s;e,\nu)
	\bigr).
\end{equation}
Here, $\operatorname{Arg}(\cdot)$ denotes the principal argument, with values in $(-\pi,\pi]$. This phase map provides source-aware and geometry-aware oscillatory cues, while the remaining higher-frequency detail is recovered by the enhancement model.

\subsection{Amplitude- and Phase-Prior-Guided Conditional Flow Matching}


We use the coarse amplitude anchor and the phase prior as complementary conditions for a generative enhancement model: the former provides transferable large-scale structure, while the latter supplies source- and geometry-aligned oscillatory cues.

Let $\phi=\operatorname{Arg}(u)$ denote the target phase. We represent the target field by
\begin{equation}
	x_1=
	\bigl[\log (|u|+\epsilon),\; \sin\phi,\; \cos\phi\bigr],
\end{equation}
where the logarithm compresses the dynamic range of the amplitude and the sine--cosine representation avoids the discontinuity of wrapped phase at $\pm\pi$. The conditioning variables are
\begin{equation}
	c=
	\bigl(
	a_{\mathrm{coarse}},\,
	[\sin\phi_{\mathrm{base}},\cos\phi_{\mathrm{base}}],\,
	z_f
	\bigr),
\end{equation}
where $a_{\mathrm{coarse}}$ is the coarse amplitude anchor from Sec.~\ref{sec:3.2}, $[\sin\phi_{\mathrm{base}},\cos\phi_{\mathrm{base}}]$ are the phase-prior features from Sec.~\ref{sec:3.3}, and $z_f$ is a Fourier feature embedding of the spectral variable $\nu$.

The conditional velocity field is parameterized by a U-Net~\cite{ronneberger2015u}. The coarse amplitude and phase-prior maps are injected through multi-scale spatial conditioning so that both global field organization and local oscillatory cues remain available across resolution levels, while $z_f$ modulates the network through FiLM-style conditioning~\cite{perez2018film} to adapt the feature response to different target frequencies.

Conditional flow matching (CFM) is used to model the enhancement process~\cite{lipman2023flow,tong2024improving}. With $x_0\sim\mathcal N(0,I)$ and the linear interpolation
\begin{equation}
	x_t=(1-t)x_0+t x_1,
\end{equation}
the network $v_\psi(x_t,t;c)$ is trained with
\begin{equation}
	\mathcal L_{\mathrm{FM}}
	=
	\mathbb E_{x_0,x_1,t}
	\left[
	\|v_\psi(x_t,t;c)-(x_1-x_0)\|_2^2
	\right].
\end{equation}
At inference, we follow the standard CFM sampling procedure conditioned on \(c\).
\section{Related Work}

\textbf{Neural operators} have become a standard framework for learning parametric PDE solution maps in function space, with representative formulations including DeepONet and FNO~\cite{lu2021learning,lifourier,kovachki2023neural}. Subsequent variants improve these models through transformer-based, geometry-aware, and multiscale designs~\cite{cao2021choose,hao2023gnot,li2023geometry,rahman2023uno,cao2024laplace}. 
Recent work has also begun to study cross-frequency prediction in wave problems. \cite{gu2022neurolight,seo2024wave} consider cross-wavelength prediction within an observed spectral regime. \cite{sun2025hankel,wang2024transfer} show that cross-frequency generalization can be improved through adaptation or fine-tuning on the target regime. \citep{benitez2024out,song2022high} further examine out-of-distribution behavior and supervised higher-frequency wavefield extrapolation. Compared with this line of work, our study focuses more directly on what fails to transfer across frequencies in oscillatory wave fields 
and how that failure should be handled.

\textbf{Generative surrogate models }\citep{ho2020denoising,song2021score} have recently become a useful tool for scientific field modeling and PDE-related inference. 
Beyond standard diffusion formulations, dynamics-aware generative models have been used for long-horizon PDE refinement, and spatiotemporal physical prediction \citep{lippe2023pde,cachay2023dyffusion}. 
Recent studies further extend generative modeling to physics-aware field reconstruction and PDE-governed scientific simulation \citep{bastek2025physics,huang2024diffusionpde,li2025physics,chen2025generating,zou2026probabilistic}. 
In contrast, our work targets scarce-supervision higher-frequency wave prediction, where the generative model is used as an enhancer guided by transferable amplitude structure and a phase-sensitive oscillatory prior.

\section{Experiments}
\begin{figure}[!t]
	\centering
	\includegraphics[width=\textwidth]{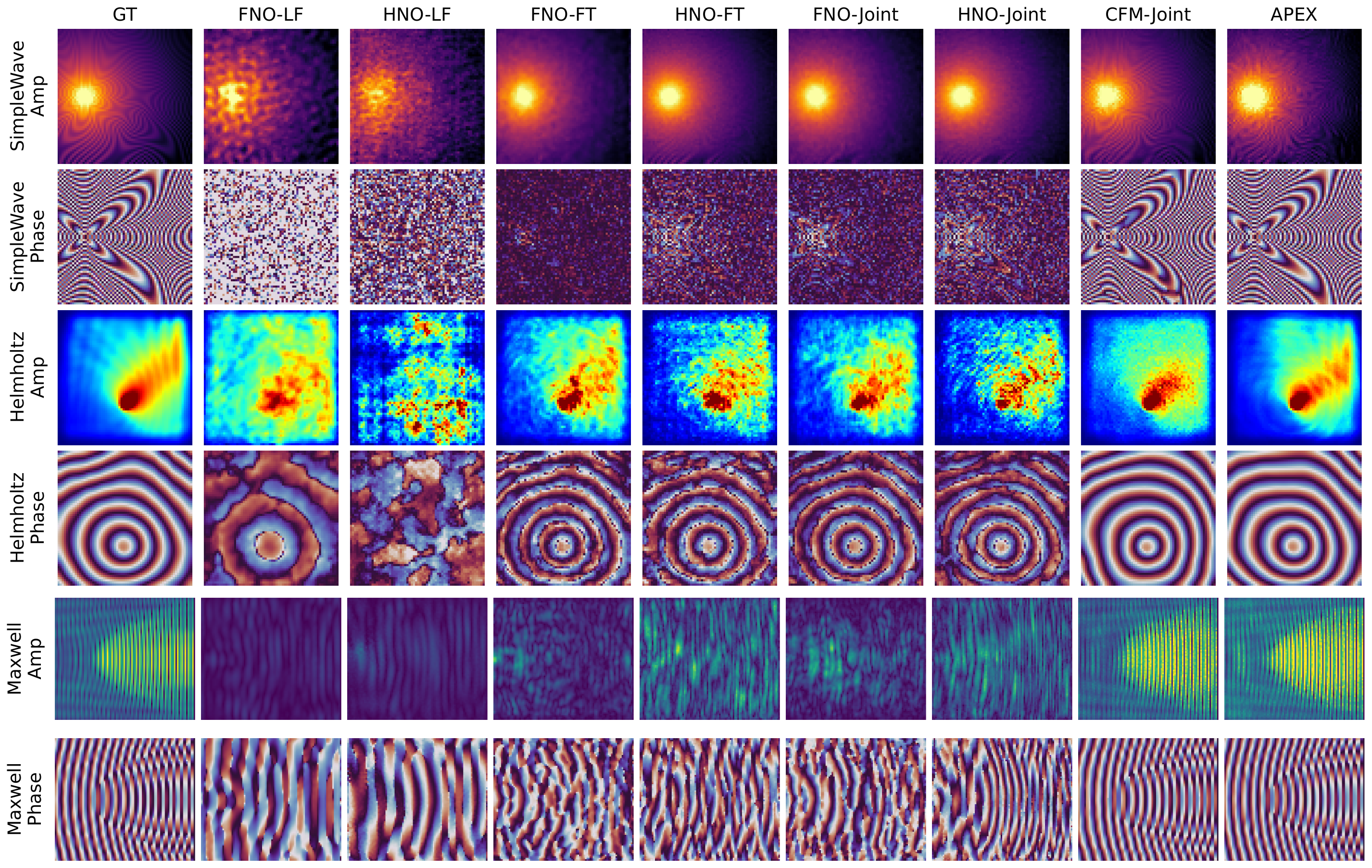}
	\caption{Qualitative comparison at HF100, the largest frequency gap considered in this paper. Across three benchmarks, APEX recovers higher-frequency amplitude and phase patterns with fewer structural distortions and better alignment with the ground truth.}
	\label{fig:hf100_qualitative}
\end{figure}

\begin{table}[!t]
	\centering
	\small
	\setlength{\tabcolsep}{4pt}
	\renewcommand{\arraystretch}{1.08}
	\caption{Higher-frequency prediction results across three benchmarks. Each frequency group reports $H^1$/AWPC. Underlined entries denote the best result within each dataset and metric column.}
	\label{tab:main_results}
	\begin{tabular}{lcccccccc}
		\toprule
		\multicolumn{9}{c}{\textbf{SimpleWave}} \\
		\midrule
		\textbf{Method} & \multicolumn{2}{c}{\textbf{HF20}} & \multicolumn{2}{c}{\textbf{HF50}} & \multicolumn{2}{c}{\textbf{HF100}} & \multicolumn{2}{c}{\textbf{Overall}} \\
		\cmidrule(lr){2-3}\cmidrule(lr){4-5}\cmidrule(lr){6-7}\cmidrule(lr){8-9}
		& \textbf{$H^1 \downarrow$} & \textbf{AWPC$\uparrow$} & \textbf{$H^1 \downarrow$} & \textbf{AWPC$\uparrow$} & \textbf{$H^1 \downarrow$} & \textbf{AWPC$\uparrow$} & \textbf{$H^1 \downarrow$} & \textbf{AWPC$\uparrow$} \\
		\midrule
		FNO-LF & 1.377 & 0.012 & 1.454 & 0.013 & 1.429 & 0.013 & 1.420 & 0.013 \\
		FNO-FT & 1.053 & 0.157 & 1.114 & 0.238 & 1.128 & 0.133 & 1.099 & 0.176 \\
		HNO-LF & 1.359 & 0.012 & 1.420 & 0.014 & 1.401 & 0.014 & 1.393 & 0.013 \\
		HNO-FT & 0.923 & 0.496 & 1.021 & 0.465 & 1.071 & 0.400 & 1.005 & 0.454 \\
		FNO-Joint & 0.963 & 0.403 & 1.032 & 0.438 & 1.100 & 0.324 & 1.032 & 0.388 \\
		HNO-Joint & 0.885 & 0.540 & 0.973 & 0.531 & 1.044 & 0.433 & 0.967 & 0.501 \\
		CFM-Joint & 0.935 & 0.620 & 1.151 & 0.516 & 1.234 & 0.351 & 1.107 & 0.496 \\
        APEX & \underline{0.396} & \underline{0.909} & \underline{0.494} & \underline{0.865} & \underline{0.804} & \underline{0.659} & \underline{0.564} & \underline{0.811} \\
		\midrule
		\multicolumn{9}{c}{\textbf{Helmholtz}} \\
		\midrule
		\textbf{Method} & \multicolumn{2}{c}{\textbf{HF20}} & \multicolumn{2}{c}{\textbf{HF50}} & \multicolumn{2}{c}{\textbf{HF100}} & \multicolumn{2}{c}{\textbf{Overall}} \\
		\cmidrule(lr){2-3}\cmidrule(lr){4-5}\cmidrule(lr){6-7}\cmidrule(lr){8-9}
		& \textbf{$H^1 \downarrow$} & \textbf{AWPC$\uparrow$} & \textbf{$H^1 \downarrow$} & \textbf{AWPC$\uparrow$} & \textbf{$H^1 \downarrow$} & \textbf{AWPC$\uparrow$} & \textbf{$H^1 \downarrow$} & \textbf{AWPC$\uparrow$} \\
		\midrule
		FNO-LF & 1.611 & 0.169 & 1.686 & 0.098 & 2.261 & 0.050 & 1.853 & 0.106 \\
		FNO-FT & 0.762 & 0.897 & 0.783 & 0.874 & 0.966 & 0.759 & 0.837 & 0.843 \\
		HNO-LF & 2.989 & 0.097 & 3.743 & 0.042 & 5.637 & 0.032 & 4.123 & 0.057 \\
		HNO-FT & 0.756 & 0.871 & 0.909 & 0.821 & 1.276 & 0.696 & 0.980 & 0.796 \\
		FNO-Joint & 0.715 & 0.903 & 0.779 & 0.850 & 0.899 & 0.737 & 0.797 & 0.830 \\
		HNO-Joint & 0.930 & 0.853 & 1.006 & 0.794 & 1.401 & 0.662 & 1.112 & 0.770 \\
		CFM-Joint & 0.852 & 0.653 & 0.895 & 0.543 & 0.930 & 0.397 & 0.893 & 0.531 \\
		APEX & \underline{0.422} & \underline{0.928} & \underline{0.492} & \underline{0.885} & \underline{0.634} & \underline{0.815} & \underline{0.516} & \underline{0.876} \\
		\midrule
		\multicolumn{9}{c}{\textbf{Maxwell}} \\
		\midrule
		\textbf{Method} & \multicolumn{2}{c}{\textbf{HF20}} & \multicolumn{2}{c}{\textbf{HF50}} & \multicolumn{2}{c}{\textbf{HF100}} & \multicolumn{2}{c}{\textbf{Overall}} \\
		\cmidrule(lr){2-3}\cmidrule(lr){4-5}\cmidrule(lr){6-7}\cmidrule(lr){8-9}
		& \textbf{$H^1 \downarrow$} & \textbf{AWPC$\uparrow$} & \textbf{$H^1 \downarrow$} & \textbf{AWPC$\uparrow$} & \textbf{$H^1 \downarrow$} & \textbf{AWPC$\uparrow$} & \textbf{$H^1 \downarrow$} & \textbf{AWPC$\uparrow$} \\
		\midrule
		FNO-LF & 1.090 & 0.066 & 1.023 & 0.003 & 1.008 & 0.000 & 1.024 & 0.017 \\
		FNO-FT & 0.840 & 0.465 & 0.899 & 0.301 & 0.954 & 0.169 & 0.925 & 0.284 \\
		HNO-LF & 1.097 & 0.044 & 1.045 & 0.012 & 1.014 & 0.002 & 1.034 & 0.007 \\
		HNO-FT & 0.912 & 0.421 & 0.947 & 0.329 & 0.962 & 0.277 & 0.951 & 0.329 \\
		FNO-Joint & 0.904 & 0.422 & 0.933 & 0.337 & 0.959 & 0.233 & 0.945 & 0.313 \\
		HNO-Joint & 0.842 & 0.441 & 0.900 & 0.342 & 0.948 & 0.285 & 0.922 & 0.342 \\
		CFM-Joint & 0.701 & \underline{0.828} & 0.897 & 0.680 & 1.187 & 0.411 & 1.061 & 0.595 \\
		APEX & \underline{0.697} & 0.812 & \underline{0.790} & \underline{0.725} & \underline{0.929} & \underline{0.624} & \underline{0.865} & \underline{0.703} \\
		\bottomrule
	\end{tabular}
\end{table}

\paragraph{Experimental setup.}

The evaluation is conducted on three wave benchmarks: SimpleWave, Helmholtz, and Maxwell. SimpleWave and Helmholtz use scalar complex fields on fixed $64\times64$ grids, while Maxwell uses the dominant polarization component $E_y$ on an $80\times92$ grid. All benchmarks use the same frequency-split protocol: four lower-frequency (LF) values and three \textbf{higher-frequency (HF)} values, with $N$ samples per frequency. The $4N$ LF samples are split $80\%/20\%$ for training/testing, and the $3N$ HF samples are split $20\%/80\%$. Evaluation is performed only on the HF test split; the LF test split is used only for lower-frequency monitoring and is not included in the reported HF metrics.
\textbf{HF20}, \textbf{HF50}, and \textbf{HF100} denote target groups whose frequencies exceed the maximum LF training frequency by $20\%$, $50\%$, and $100\%$, respectively.
Additional details are in Appendix~\ref{app:dataset_details}. The code and datasets are available at \url{https://anonymous.4open.science/r/apex-B88A/}.

The comparison includes three baseline model families: the Fourier neural operator (FNO) \citep{lifourier}, the Helmholtz neural operator (HNO) \citep{zou2024deep}, and conditional flow matching (CFM) \citep{lipman2023flow,tong2024improving}. For the operator baselines, we consider three training protocols. `LF' denotes LF-only training followed by direct extrapolation to the target frequencies. `FT' initializes from the LF model and then fine-tunes on the available HF training data. `Joint' denotes joint training on all available LF and HF training samples.
This yields FNO-LF/FT/Joint and HNO-LF/FT/Joint as operator baselines, while CFM-Joint is the generative baseline trained jointly on available LF and HF data. APEX first trains and freezes the LF operator on LF samples, then trains the enhancer using the same HF training split.

Performance is evaluated with two complex-field metrics. The relative complex $H^1$ error is \cite{qiu2024derivative}
\begin{equation}
H^1(\hat{u},u)=
\left(
\frac{
	\|\hat{u}-u\|_2^2
	+
	\|\Delta_x(\hat{u}-u)\|_2^2
	+
	\|\Delta_y(\hat{u}-u)\|_2^2
}{
	\|u\|_2^2
	+
	\|\Delta_x u\|_2^2
	+
	\|\Delta_y u\|_2^2
}
\right)^{1/2},
\label{eq:h1_metric}
\end{equation}
where $\hat{u}$ and $u$ denote the predicted and ground-truth complex wave fields, and $\Delta_x,\Delta_y$ are discrete forward differences along the spatial directions. Amplitude-weighted phase coherence (AWPC) is
\begin{equation}
\mathrm{AWPC}(\hat{u},u)=
\left|
\frac{
	\sum_{\mathbf{x}} |u(\mathbf{x})|
	\exp\!\big(i(\operatorname{Arg} (\hat{u}(\mathbf{x}))-\operatorname{Arg} (u(\mathbf{x})))\big)
}{
	\sum_{\mathbf{x}} |u(\mathbf{x})|
}
\right|,
\label{eq:awpc_metric}
\end{equation}
which evaluates phase agreement while weighting each spatial location by the magnitude of the ground-truth field \cite{barnes2024phase,cardenas2026exercise}. All reported results are averaged over the corresponding test set. Additional uncertainty estimates for the main results are reported in Appendix~\ref{app:bootstrap_main_results}.

\paragraph{Main results.}
The proposed method achieves the strongest overall higher-frequency prediction performance across the three benchmarks. As shown in Fig.~\ref{fig:hf100_qualitative}, direct LF extrapolation retains some coarse amplitude structure but exhibits clear phase and oscillatory mismatch. Fine-tuned and jointly trained operator baselines improve the coarse pattern but remain limited under scarce target-frequency supervision, while `CFM-Joint' can produce plausible structures that are not always aligned with the target field. Additional qualitative comparisons at HF20 and HF50 are provided in Appendix~\ref{app:additional_qualitative}. Tab.~\ref{tab:main_results} further confirms this trend: APEX achieves the best overall results on all three benchmarks and leads in nearly all metric columns, with only a small AWPC gap at Maxwell HF20. These results support the combination of transferable coarse amplitude and physics-guided phase priors for higher-frequency wave-field prediction.

\paragraph{Ablation study.}
The ablation results in Tab.~\ref{tab:main_ablation} examine the two main design components. Removing the phase prior consistently degrades both $H^1$ and AWPC, suggesting that the coarse amplitude anchor alone does not fully recover the required oscillatory structure. Removing the coarse anchor leads to a larger drop, consistent with its role as the main transferable structural reference. The two components therefore appear complementary: the coarse prediction supports global organization, while the phase prior improves fine oscillatory recovery.

\begin{table}[!t]
\centering
\small
\setlength{\tabcolsep}{4pt}
\renewcommand{\arraystretch}{1.08}
\caption{Ablation results on the Helmholtz benchmark.}
\label{tab:main_ablation}
\begin{tabular}{lcccccccc}
\toprule
\textbf{Method} & \multicolumn{2}{c}{\textbf{HF20}} & \multicolumn{2}{c}{\textbf{HF50}} & \multicolumn{2}{c}{\textbf{HF100}} & \multicolumn{2}{c}{\textbf{Overall}} \\
\cmidrule(lr){2-3}\cmidrule(lr){4-5}\cmidrule(lr){6-7}\cmidrule(lr){8-9}
& \textbf{$H^1 \downarrow$} & \textbf{AWPC$\uparrow$} & \textbf{$H^1 \downarrow$} & \textbf{AWPC$\uparrow$} & \textbf{$H^1 \downarrow$} & \textbf{AWPC$\uparrow$} & \textbf{$H^1 \downarrow$} & \textbf{AWPC$\uparrow$} \\
\midrule
w/o Phase Prior & 0.483 & 0.880 & 0.580 & 0.809 & 0.812 & 0.689 & 0.625 & 0.793 \\
w/o Coarse Anchor & 0.907 & 0.628 & 1.081 & 0.505 & 1.252 & 0.379 & 1.080 & 0.504 \\
APEX & \underline{0.422} & \underline{0.928} & \underline{0.492} & \underline{0.885} & \underline{0.634} & \underline{0.815} & \underline{0.516} & \underline{0.876} \\
\bottomrule
\end{tabular}
\end{table}

\paragraph{Sensitivity to target-frequency supervision.}
Tab.~\ref{tab:ratio_helmholtz} fixes the lower-frequency split and varies only the training/testing ratio in the higher-frequency regime, isolating the effect of target-frequency supervision. 
Across different higher-frequency supervision ratios, APEX remains consistently better than `CFM-Joint', indicating that its advantage is not tied to a particular amount of target-frequency data.
One possible reason is that, in joint generative training, the lower-frequency samples still dominate the training distribution, making it difficult for `CFM-Joint' to sufficiently adapt to the scarce higher-frequency regime. This is consistent with Fig.~\ref{fig:hf100_qualitative}: `CFM-Joint' can generate visually plausible structures, but these structures are not always aligned with the target field. Therefore, the main limitation is not only the amount of target-frequency data, but also the lack of explicit constraints that align the generated field with the target higher-frequency response. APEX addresses this by using transferable structure and phase prior to guide the generated field toward the target response.

\begin{table}[!t]
\centering
\scriptsize
\setlength{\tabcolsep}{3.5pt}
\renewcommand{\arraystretch}{1.06}
\caption{Sensitivity to target-frequency supervision on the Helmholtz benchmark. Underlines mark the better result between CFM-Joint and APEX at each ratio.}
\label{tab:ratio_helmholtz}
\begin{tabular}{llcccccccc}
\toprule
\textbf{Ratio} & \textbf{Method} & \multicolumn{2}{c}{\textbf{HF20}} & \multicolumn{2}{c}{\textbf{HF50}} & \multicolumn{2}{c}{\textbf{HF100}} & \multicolumn{2}{c}{\textbf{Overall}} \\
\cmidrule(lr){3-4}\cmidrule(lr){5-6}\cmidrule(lr){7-8}\cmidrule(lr){9-10}
& & \textbf{$H^1 \downarrow$} & \textbf{AWPC$\uparrow$} & \textbf{$H^1 \downarrow$} & \textbf{AWPC$\uparrow$} & \textbf{$H^1 \downarrow$} & \textbf{AWPC$\uparrow$} & \textbf{$H^1 \downarrow$} & \textbf{AWPC$\uparrow$} \\
\midrule
1/9 & CFM-Joint & 0.859 & 0.655 & 0.912 & 0.489 & 0.936 & 0.409 & 0.902 & 0.518 \\
1/9 & APEX & \underline{0.656} & \underline{0.819} & \underline{0.721} & \underline{0.760} & \underline{0.906} & \underline{0.635} & \underline{0.761} & \underline{0.738} \\
\midrule
2/8 & CFM-Joint & 0.852 & 0.653 & 0.895 & 0.543 & 0.930 & 0.397 & 0.893 & 0.531 \\
2/8 & APEX & \underline{0.422} & \underline{0.928} & \underline{0.492} & \underline{0.885} & \underline{0.634} & \underline{0.815} & \underline{0.516} & \underline{0.876} \\
\midrule
3/7 & CFM-Joint & 0.878 & 0.603 & 0.908 & 0.521 & 0.938 & 0.401 & 0.908 & 0.508 \\
3/7 & APEX & \underline{0.494} & \underline{0.929} & \underline{0.534} & \underline{0.902} & \underline{0.663} & \underline{0.829} & \underline{0.564} & \underline{0.886} \\
\midrule
4/6 & CFM-Joint & 0.865 & 0.661 & 0.905 & 0.540 & 0.933 & 0.418 & 0.901 & 0.539 \\
4/6 & APEX & \underline{0.372} & \underline{0.960} & \underline{0.441} & \underline{0.933} & \underline{0.626} & \underline{0.893} & \underline{0.480} & \underline{0.929} \\
\bottomrule
\end{tabular}
\end{table}

\section{Conclusion}
This paper studies higher-frequency prediction of complex wave fields under limited target-frequency supervision. We identify an asymmetric transfer pattern: coarse amplitude structure remains relatively stable, whereas phase-sensitive oscillatory structure deteriorates rapidly under lower-to-higher extrapolation. APEX leverages this asymmetry by using a frozen lower-frequency neural operator as a coarse amplitude anchor and a Green's-function-inspired phase prior to guide conditional flow matching for oscillatory recovery. Experiments, transferability analysis, and ablations across three wave benchmarks support this decomposition and show stronger performance than direct lower-to-higher extrapolation, target-adapted operator, and joint generative baselines. A limitation is that the phase prior is simplified; future work will incorporate stronger physics-aware priors and constraints for more accurate and physically consistent higher-frequency prediction.
\bibliographystyle{plainnat}
\bibliography{refs}

\clearpage
\appendix

\section{Proofs for the Amplitude--Phase Error Decomposition}
\label{app:proof_freq_amp_phase}

This appendix provides the proofs of the analytical results stated in Sec.~\ref{sec:asymmetry_transferability}. We first derive an exact pointwise amplitude--phase error decomposition, then integrate it over a spatial region to obtain Proposition~\ref{prop:regional_exact_decomp}, and finally establish Corollary~\ref{cor:regional_upper_bound}.

\subsection{Exact Pointwise Decomposition}

\begin{lemma}[Exact pointwise amplitude--phase error decomposition]
\label{lem:pointwise_exact_decomp}
Fix a target frequency $\nu>0$. Let
\begin{equation}
u(\mathbf r,\nu)=A(\mathbf r;\nu)\exp\bigl(i\nu\tau(\mathbf r)\bigr),
\qquad
\hat u(\mathbf r,\nu)=\hat A(\mathbf r;\nu)\exp\bigl(i\nu\hat\tau(\mathbf r)\bigr),
\end{equation}
and define
\begin{equation}
\Delta\tau(\mathbf r):=\hat\tau(\mathbf r)-\tau(\mathbf r).
\end{equation}
Then, for every $\mathbf r\in\Omega$,
\begin{equation}
|\hat u(\mathbf r,\nu)-u(\mathbf r,\nu)|^2
=
\bigl(\hat A(\mathbf r;\nu)-A(\mathbf r;\nu)\bigr)^2
+
4A(\mathbf r;\nu)\hat A(\mathbf r;\nu)
\sin^2\!\left(\frac{\nu\Delta\tau(\mathbf r)}{2}\right).
\label{eq:pointwise_exact_decomp}
\end{equation}
\end{lemma}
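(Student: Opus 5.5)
The plan is a direct pointwise computation using the law of cosines for complex numbers, followed by the half-angle identity. Fix $\mathbf r\in\Omega$ and abbreviate $A=A(\mathbf r;\nu)$, $\hat A=\hat A(\mathbf r;\nu)$, and $\theta=\nu\Delta\tau(\mathbf r)$. Because $|\exp(i\nu\tau)|=1$, we can factor out the true phase:
\begin{equation*}
|\hat u-u|=\bigl|\hat A\exp(i\nu\hat\tau)-A\exp(i\nu\tau)\bigr|=\bigl|\hat A\exp(i\theta)-A\bigr|.
\end{equation*}
This reduces the problem to the distance between two points in the complex plane, at radii $\hat A$ and $A$, separated by the angle $\theta$.

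The first step expands $|z|^2=z\bar z$ for $z=\hat A e^{i\theta}-A$. Since $A,\hat A$ are real, this gives
\begin{equation*}
|\hat A e^{i\theta}-A|^2=\hat A^2+A^2-2A\hat A\cos\theta.
\end{equation*}

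The second step rewrites the right-hand side by adding and subtracting $2A\hat A$:
\begin{equation*}
\hat A^2+A^2-2A\hat A\cos\theta=(\hat A-A)^2+2A\hat A(1-\cos\theta).
\end{equation*}

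The third step applies the half-angle identity $1-\cos\theta=2\sin^2(\theta/2)$. This turns the second term into $4A\hat A\sin^2(\nu\Delta\tau/2)$, which is exactly \eqref{eq:pointwise_exact_decomp}.

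There is no genuine obstacle here. The only point needing care is that $A$ and $\hat A$ must be real; nonnegativity is not required for the algebra, since conjugation leaves them unchanged either way. A second point is that the phase enters only through the difference $\nu(\hat\tau-\tau)$. As a result, any $2\pi$ ambiguity in representing the phase as $\nu\tau$ is harmless, because $\sin^2(\cdot/2)$ is $2\pi$-periodic in $\theta$.

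With the lemma established, Proposition~\ref{prop:regional_exact_decomp} follows by integrating the pointwise identity over $S$. Corollary~\ref{cor:regional_upper_bound} then follows from $\sin^2(x/2)\le x^2/4$ together with the bounds $A\le A_{\max}$ and $\hat A\le\hat A_{\max}$.
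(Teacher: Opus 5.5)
Your proof is correct and matches the paper's argument: you expand $|\hat u-u|^2$ to $\hat A^2+A^2-2A\hat A\cos(\nu\Delta\tau)$ and then apply $1-\cos x=2\sin^2(x/2)$. Factoring out the unit-modulus phase $e^{i\nu\tau}$ first is only a cosmetic simplification of that same computation.
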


\begin{proof}
Fix $\mathbf r\in\Omega$ and suppress the dependence on $(\mathbf r,\nu)$ when no ambiguity arises. By direct expansion,
\begin{align}
|\hat u-u|^2
&=
\left|
\hat A e^{i\nu\hat\tau}-A e^{i\nu\tau}
\right|^2 \\
&=
\hat A^2 + A^2 - 2A\hat A \cos\bigl(\nu(\hat\tau-\tau)\bigr).
\end{align}
Using $\Delta\tau=\hat\tau-\tau$ and the identity
\begin{equation}
\cos x = 1-2\sin^2(x/2),
\end{equation}
we obtain
\begin{align}
|\hat u-u|^2
&=
\hat A^2 + A^2 - 2A\hat A\Bigl(1-2\sin^2(\nu\Delta\tau/2)\Bigr) \\
&=
(\hat A-A)^2 + 4A\hat A\sin^2\!\left(\frac{\nu\Delta\tau}{2}\right),
\end{align}
which proves \eqref{eq:pointwise_exact_decomp}.
\end{proof}

\subsection{Proof of Proposition~\ref{prop:regional_exact_decomp}}

\begin{proof}[Proof of Proposition~\ref{prop:regional_exact_decomp}]
By definition of the regional $L^2$ field error,
\begin{equation}
\|\hat u-u\|_{L^2(S)}^2
=
\int_S |\hat u(\mathbf r,\nu)-u(\mathbf r,\nu)|^2\,d\mathbf r.
\end{equation}
Applying Lemma~\ref{lem:pointwise_exact_decomp} pointwise inside the integral yields
\begin{align}
\|\hat u-u\|_{L^2(S)}^2
&=
\int_S
\left[
\bigl(\hat A(\mathbf r;\nu)-A(\mathbf r;\nu)\bigr)^2
+
4A(\mathbf r;\nu)\hat A(\mathbf r;\nu)
\sin^2\!\left(\frac{\nu\Delta\tau(\mathbf r)}{2}\right)
\right]d\mathbf r \\
&=
\|\hat A-A\|_{L^2(S)}^2
+
4\int_S A(\mathbf r;\nu)\hat A(\mathbf r;\nu)
\sin^2\!\left(\frac{\nu\Delta\tau(\mathbf r)}{2}\right)\,d\mathbf r,
\end{align}
which is exactly \eqref{eq:regional_exact_decomp}.
\end{proof}

\subsection{Proof of Corollary~\ref{cor:regional_upper_bound}}

\begin{proof}[Proof of Corollary~\ref{cor:regional_upper_bound}]
Starting from Proposition~\ref{prop:regional_exact_decomp},
\begin{equation}
\|\hat u-u\|_{L^2(S)}^2
=
\|\hat A-A\|_{L^2(S)}^2
+
4\int_S A(\mathbf r;\nu)\hat A(\mathbf r;\nu)
\sin^2\!\left(\frac{\nu\Delta\tau(\mathbf r)}{2}\right)\,d\mathbf r.
\end{equation}
Using the elementary inequality
\begin{equation}
\sin z \le z, \qquad z\ge 0,
\end{equation}
we obtain
\begin{equation}
\sin^2\!\left(\frac{\nu\Delta\tau(\mathbf r)}{2}\right)
=
\sin^2\!\left(\frac{\nu|\Delta\tau(\mathbf r)|}{2}\right)
\le
\left(\frac{\nu|\Delta\tau(\mathbf r)|}{2}\right)^2
=
\frac{\nu^2}{4}\Delta\tau(\mathbf r)^2.
\end{equation}
Substituting this bound into Proposition~\ref{prop:regional_exact_decomp} gives
\begin{align}
\|\hat u-u\|_{L^2(S)}^2
&\le
\|\hat A-A\|_{L^2(S)}^2
+
4\int_S A(\mathbf r;\nu)\hat A(\mathbf r;\nu)\cdot
\frac{\nu^2}{4}\Delta\tau(\mathbf r)^2\,d\mathbf r \\
&=
\|\hat A-A\|_{L^2(S)}^2
+
\nu^2\int_S A(\mathbf r;\nu)\hat A(\mathbf r;\nu)\Delta\tau(\mathbf r)^2\,d\mathbf r,
\end{align}
If, in addition, $A(\mathbf r;\nu)\le A_{\max}$ and $\hat A(\mathbf r;\nu)\le \hat A_{\max}$ on $S$, then
\begin{equation}
A(\mathbf r;\nu)\hat A(\mathbf r;\nu)\le A_{\max}\hat A_{\max},
\qquad \forall\,\mathbf r\in S.
\end{equation}
Hence
\begin{align}
\|\hat u-u\|_{L^2(S)}^2
&\le
\|\hat A-A\|_{L^2(S)}^2
+
\nu^2 A_{\max}\hat A_{\max}
\int_S \Delta\tau(\mathbf r)^2\,d\mathbf r \\
&=
\|\hat A-A\|_{L^2(S)}^2
+
\nu^2 A_{\max}\hat A_{\max}\,
\|\Delta\tau\|_{L^2(S)}^2,
\end{align}
which proves \eqref{eq:regional_uniform_upper_bound}.
\end{proof}

\subsection{Remark on Scope}

The results above should be interpreted as structural characterizations of the regional field error under the local amplitude--phase abstraction used in Sec.~\ref{sec:asymmetry_transferability}. Proposition~\ref{prop:regional_exact_decomp} is exact and shows that amplitude mismatch and phase-sensitive mismatch enter the global prediction error through qualitatively different mechanisms. Corollary~\ref{cor:regional_upper_bound} further makes explicit that the phase-induced component carries a frequency-dependent scaling factor. Their role in the main text is explanatory rather than exhaustive: they formalize one mechanism behind the empirical asymmetry observed in Fig.~\ref{fig:em_similarity}, namely that phase-sensitive mismatch becomes increasingly costly in higher-frequency prediction, while coarse amplitude remains comparatively more stable and reusable across frequencies.

\section{Dataset Construction}
\label{app:dataset_details}

This appendix provides additional details on the benchmark construction and data splits used in the experiments. The three benchmarks instantiate the same lower-to-higher frequency prediction setting studied in the main text: a lower-frequency regime $\mathcal V_{\mathrm{LF}}$ is used to train the operator backbone, while a disjoint higher-frequency regime $\mathcal V_{\mathrm{HF}}$ is used for target-frequency adaptation and evaluation. Tables~\ref{tab:dataset_construction} and~\ref{tab:dataset_splits} summarize the corresponding dataset specifications and train/test splits.

\paragraph{SimpleWave benchmark.}
The SimpleWave benchmark is generated by a two-path wave simulator on a two-dimensional square domain $\Omega=[0,10]\times[0,10]$ in dimensionless spatial coordinates. The field is discretized on a $64\times64$ grid. Each sample is specified by an environment parameter $e=v$, where $v$ is a scalar propagation speed sampled uniformly from $[0.8,1.2]$, and by a frequency $\nu$, which is treated as a dimensionless spectral parameter in this benchmark. The model input is represented as two spatial channels, $[v,\nu]$, where both quantities are broadcast over the grid. The output is a scalar complex field $u(\mathbf r;v,\nu)\in\mathbb C^{64\times64}$.

The simulator combines a direct propagation path and a reflected path. Let $\mathbf r_s=(2.0,5.0)$ denote the source location and let $\mathbf r_m=(2.0,-5.0)$ denote the mirror source used to define the reflected path. For a spatial location $\mathbf r=(x,y)$, define
\begin{equation}
	R_1(\mathbf r)=\|\mathbf r-\mathbf r_s\|_2,
	\qquad
	R_2(\mathbf r)=\|\mathbf r-\mathbf r_m\|_2.
\end{equation}
The corresponding travel-time functions are
\begin{equation}
	\tau_1(\mathbf r;v)
	=
	\frac{R_1(\mathbf r)}{v}
	\left(1+\eta\,q_1(\mathbf r)\right),
	\qquad
	\tau_2(\mathbf r;v)
	=
	\frac{R_2(\mathbf r)+d}{v}
	\left(1+\eta\,q_2(\mathbf r)\right),
\end{equation}
where $q_1$ and $q_2$ are fixed smooth spatial perturbation fields, $\eta=0.08$ controls their strength, and $d=0.35$ is a delay bias for the reflected path. Specifically, with $L_x=L_y=10$, we use
\begin{align}
	q_1(\mathbf r)
	&=
	0.45\sin\left(\frac{2\pi x}{L_x}\right)
	+
	0.35\cos\left(\frac{2\pi y}{L_y}\right)
	+
	0.20\sin\left(\frac{2\pi(x+0.6y)}{L_x}\right),\\
	q_2(\mathbf r)
	&=
	0.40\cos\left(\frac{2\pi(x-0.3y)}{L_x}\right)
	+
	0.30\sin\left(\frac{2\pi y}{L_y}\right)
	+
	0.15\cos\left(\frac{2\pi(x+y)}{L_y}\right).
\end{align}
These perturbation fields are fixed across samples. The complex field is then generated as
\begin{equation}
	u(\mathbf r;v,\nu)
	=
	a_1(\mathbf r)
	\exp\left(i\,2\pi\nu\,\tau_1(\mathbf r;v)\right)
	+
	\lambda\,a_2(\mathbf r)
	\exp\left(i\,2\pi\nu\,\tau_2(\mathbf r;v)\right),
\end{equation}
with reflection coefficient $\lambda=0.18$.
Here the factor $2\pi$ reflects the cyclic-frequency convention used in the simulator; in the theoretical notation of Sec.~\ref{sec:2.2}, this constant factor is absorbed into the generic spectral variable $\nu$.
The two spatial amplitude envelopes are fixed deterministic functions of location:
\begin{equation}
	a_1(\mathbf r)
	=
	\frac{\exp(-\alpha_1 R_1(\mathbf r))}
	{\left(R_1(\mathbf r)+c_1\right)^{p_1}},
	\qquad
	a_2(\mathbf r)
	=
	\frac{\exp(-\alpha_2 R_2(\mathbf r))}
	{\sqrt{R_2(\mathbf r)+\epsilon}},
\end{equation}
where $\alpha_1=0$, $c_1=0.8$, $p_1=0.3$, $\alpha_2=0.12$, and $\epsilon$ is a small numerical constant. Thus, the amplitude envelopes are not learned and do not vary randomly across samples; sample-to-sample variation is induced by the sampled propagation speed $v$ and by the queried frequency $\nu$, while the path geometry and envelope parameters are fixed.

The lower-frequency regime is
\begin{equation}
	\mathcal V_{\mathrm{LF}}=\{1,2,3,4\},
\end{equation}
and the higher-frequency regime is
\begin{equation}
	\mathcal V_{\mathrm{HF}}=\{4.8,6,8\},
\end{equation}
where $\nu$ is treated as a dimensionless spectral parameter. These three higher-frequency values correspond to $20\%$, $50\%$, and $100\%$ extrapolation beyond the largest lower-frequency value, respectively. For each frequency, 250 samples are generated. The lower-frequency data are split into 800 training samples and 200 test samples, while the higher-frequency data are split into 150 training samples and 600 test samples.

\paragraph{Helmholtz benchmark.}
The Helmholtz benchmark is generated by solving a two-dimensional heterogeneous frequency-domain Helmholtz equation on a square domain $\Omega=[0,1]\times[0,1]$ in dimensionless spatial coordinates. The field is discretized on a $64\times64$ grid. For each sample, the complex field $u(\mathbf r;n,k)$ satisfies
\begin{equation}
	-\Delta u(\mathbf r)-k^2 n(\mathbf r)\bigl(1+i\sigma_{\mathrm{abs}}(\mathbf r)\bigr)u(\mathbf r)
	=
	s(\mathbf r).
\end{equation}
Here $n(\mathbf r)$ is the heterogeneous medium, $k$ is the wavenumber, $\sigma_{\mathrm{abs}}(\mathbf r)$ is a boundary sponge profile, and $s(\mathbf r)$ is a localized complex source. The medium is generated from a smooth Gaussian random field. Let $\xi(\mathbf r)$ denote a standard Gaussian noise field and let $\mathcal G_{\ell_g}$ denote Gaussian smoothing on the grid with smoothing width $\ell_g=8.0$ grid cells. We form a normalized smooth field
\begin{equation}
	g(\mathbf r)
	=
	\frac{\mathcal G_{\ell_g}\xi(\mathbf r)-\mu_g}{s_g},
\end{equation}
where $\mu_g$ and $s_g$ are the empirical mean and standard deviation after smoothing. The medium is then
\begin{equation}
	n(\mathbf r)
	=
	\operatorname{clip}\left(1+0.25\,g(\mathbf r),\,0.6,\,1.4\right).
\end{equation}
Thus, the sample-to-sample variability in the Helmholtz benchmark comes from the sampled medium field $n(\mathbf r)$ and the queried wavenumber $k$.

The source is a localized complex Gaussian centered at $\mathbf r_s=(0.30,0.50)$:
\begin{equation}
	s(\mathbf r)
	=
	A_s
	\exp\left(-\frac{\|\mathbf r-\mathbf r_s\|_2^2}{2\sigma_s^2}\right)
	\exp\left(i\beta\,\frac{(\mathbf r-\mathbf r_s)^\top \mathbf d_s}{\sigma_s}\right),
\end{equation}
where $A_s=1$, $\sigma_s=0.03$, $\beta=\pi/6$, and $\mathbf d_s=(\cos(\pi/4),\sin(\pi/4))$. The phase-ramp term introduces a mild directional bias near the source while keeping the source localized.

We apply a sponge profile near the boundary. Let $D_x(\mathbf r)$ and $D_y(\mathbf r)$ denote the distances from $\mathbf r$ to the nearest vertical and horizontal boundaries, respectively. With absorbing width $w_{\mathrm{abs}}=0.15$, strength $\gamma_{\mathrm{abs}}=1.5$, and power $p_{\mathrm{abs}}=2$, the profile is
\begin{equation}
	\sigma_{\mathrm{abs}}(\mathbf r)
	=
	\gamma_{\mathrm{abs}}
	\left[
	\left(\frac{w_{\mathrm{abs}}-D_x(\mathbf r)}{w_{\mathrm{abs}}}\right)_+^{p_{\mathrm{abs}}}
	+
	\left(\frac{w_{\mathrm{abs}}-D_y(\mathbf r)}{w_{\mathrm{abs}}}\right)_+^{p_{\mathrm{abs}}}
	\right],
\end{equation}
where $(\cdot)_+=\max(\cdot,0)$. The equation is solved by a second-order finite-difference discretization, yielding a sparse linear system for the interior grid values.

The model input is represented as two spatial channels, $[n,k]$, where $k$ is broadcast over the grid. The target is a scalar complex field $u(\mathbf r;n,k)\in\mathbb C^{64\times64}$. For this benchmark, the generic spectral variable $\nu$ corresponds to the wavenumber $k$ used in the finite-difference solver. The lower-frequency regime is
\begin{equation}
	\mathcal V_{\mathrm{LF}}=\{10,15,20,25\},
\end{equation}
and the higher-frequency regime is
\begin{equation}
	\mathcal V_{\mathrm{HF}}=\{30,37.5,50\},
\end{equation}
where $\nu$ is treated as a dimensionless spectral parameter. These three higher-frequency values correspond to $20\%$, $50\%$, and $100\%$ extrapolation beyond the largest lower-frequency value, respectively. For each spectral value, 500 samples are generated. The lower-frequency data are split into 1600 training samples and 400 test samples, while the higher-frequency data are split into 300 training samples and 1200 test samples.

\paragraph{Maxwell benchmark.}

The Maxwell benchmark is generated using full-wave electromagnetic simulations in CST Studio Suite. 
We employ the time-domain solver and place electric-field monitors at multiple prescribed frequencies to extract the corresponding frequency-domain field responses from a single broadband simulation. 
For each dielectric configuration, the simulated electric-field distribution is recorded on the observation plane at the selected frequencies, yielding complex-valued field samples for cross-frequency prediction.
The physical region spans $50\,\mathrm{cm}$ along each coordinate direction, and the learning problem uses two-dimensional $(y,z)$ observation slices sampled on an $80\times92$ grid. Each dielectric configuration consists of five horizontally stacked material layers, with relative permittivity increasing monotonically from the top layer to the bottom layer. The model input is represented by three spatial channels $[\varepsilon_r,x,f]$, where $\varepsilon_r(y,z)$ is the relative-permittivity distribution on the observation slice, $x$ is the slice coordinate broadcast over the grid, and $f$ is the excitation frequency broadcast over the grid.

The field is excited by a localized electric dipole source placed $1.5\,\mathrm{cm}$ above the top material surface and horizontally centered in the domain. Owing to its compact spatial support relative to the computational domain and sampling scale, the dipole is treated as a point-like excitation source. The upper and lateral boundaries use open-space radiation conditions, while the lower boundary uses an electric-wall condition,
\begin{equation}
	E_t = 0,
\end{equation}
where $E_t$ denotes the tangential electric field.

The target is the selected dominant polarization component
\begin{equation}
	u(y,z;\varepsilon_r,x,f)=E_y(y,z;f)\in\mathbb C^{80\times92},
\end{equation}
extracted from the CST frequency-domain solution. In this benchmark, the generic spectral variable $\nu$ corresponds to the physical frequency $f$ in GHz. The lower-frequency regime is
\begin{equation}
	\mathcal V_{\mathrm{LF}}
	=
	\{1.0,1.5,2.0,2.5\}\,\mathrm{GHz},
\end{equation}
and the higher-frequency regime is
\begin{equation}
	\mathcal V_{\mathrm{HF}}
	=
	\{3.0,3.75,5.0\}\,\mathrm{GHz}.
\end{equation}
These higher-frequency values correspond to $20\%$, $50\%$, and $100\%$ extrapolation beyond the largest lower-frequency value of $2.5\,\mathrm{GHz}$, respectively. We use 26 dielectric configurations and 30 observation slices for each frequency, giving 780 samples per frequency. The lower-frequency data are split into 2496 training samples and 624 test samples, while the higher-frequency data are split into 468 training samples and 1872 test samples.

\begin{table*}[t]
	\centering
	\small
	\setlength{\tabcolsep}{4pt}
	\renewcommand{\arraystretch}{1.08}
	\caption{Dataset construction and spectral splits used in the benchmark experiments.}
	\label{tab:dataset_construction}
	\resizebox{\textwidth}{!}{%
		\begin{tabular}{lcccccc}
			\toprule
			Dataset & Generator & Grid & Input channels & Target & LF spectral set & HF spectral set \\
			\midrule
			SimpleWave &
			two-path wave simulator &
			$64\times64$ &
			$[v,\nu]$ &
			scalar complex field &
			$\{1,2,3,4\}$ &
			$\{4.8,6,8\}$ \\
			Helmholtz &
			finite-difference Helmholtz solver &
			$64\times64$ &
			$[n,k]$ &
			scalar complex field &
			$\{10,15,20,25\}$ &
			$\{30,37.5,50\}$ \\
			Maxwell &
			time domain solver &
			$80\times92$ &
			$[\varepsilon_r,x,f]$ &
			$E_y$ complex field &
			$\{1.0,1.5,2.0,2.5\}$ GHz &
			$\{3.0,3.75,5.0\}$ GHz \\
			\bottomrule
		\end{tabular}
	}
\end{table*}

\begin{table*}[t]
	\centering
	\small
	\setlength{\tabcolsep}{5pt}
	\renewcommand{\arraystretch}{1.08}
	\caption{Train/test splits used in the benchmark experiments. The LF split is used for training the frozen operator backbone, while the HF split is used for target-frequency enhancement and evaluation. Joint splits are used by joint-training baselines.}
	\label{tab:dataset_splits}
	\begin{tabular}{lccccc}
		\toprule
		Dataset & LF total & LF train/test & HF total & HF train/test & Joint train/test \\
		\midrule
		SimpleWave & $4\times250=1000$ & $800/200$ & $3\times250=750$ & $150/600$ & $950/600$ \\
		Helmholtz & $4\times500=2000$ & $1600/400$ & $3\times500=1500$ & $300/1200$ & $1900/1200$ \\
		Maxwell & $4\times780=3120$ & $2496/624$ & $3\times780=2340$ & $468/1872$ & $2964/1872$ \\
		\bottomrule
	\end{tabular}
\end{table*}

\section{Training and Model Hyperparameters}
\label{app:training_details}
All experiments are conducted on a single NVIDIA RTX 4090 GPU with 24 GB memory.
All operator baselines use unit-Gaussian normalization for inputs and outputs, with complex fields represented in magnitude--phase form. The FNO baselines are trained for 200 epochs. SimpleWave uses a 4-layer FNO with 24 Fourier modes, hidden width 48, and lifting/projection width 128. Helmholtz uses a 4-layer FNO with 24 Fourier modes, hidden width 64, and lifting/projection width 256. Maxwell uses a 4-layer FNO with 32 Fourier modes, hidden width 64, and lifting/projection width 256. The HNO baselines are also trained for 200 epochs: SimpleWave uses width 8 with 24 Fourier modes and factor 0.75, Helmholtz uses width 11 with 24 Fourier modes and factor 0.75, and Maxwell uses width 11 with 32 Fourier modes and factor 0.75.

CFM-Joint and the second stage of APEX use the same U-Net architecture: base width 128, channel multipliers $[1,2,4]$, two residual blocks per resolution level, mid-block attention with 4 heads, and time embedding dimension 64. The CFM models are trained for 500 epochs on SimpleWave, 700 epochs on Helmholtz, and 1000 epochs on Maxwell. At inference time, all CFM models use midpoint ODE sampling with 50 steps.

CFM-Joint is trained on the joint LF+HF split and conditions on the dataset input channels. For SimpleWave and Helmholtz, these channels contain the environment field and spectral variable. For Maxwell, they contain the permittivity map $\epsilon$, the slice coordinate $x$, and the spectral variable.

APEX uses the same CFM architecture and training epochs, but augments the conditioning with the coarse amplitude anchor $a_{\mathrm{coarse}}$ and sine--cosine phase-prior maps $[\sin\phi_{\mathrm{base}},\cos\phi_{\mathrm{base}}]$, as defined in the main text. SimpleWave and Helmholtz use these spatial maps together with their environment fields. Maxwell follows the same interface and additionally includes the slice coordinate $x$ in the environment representation.

\section{Physics-Aware Instantiations of the Phase Prior}
\label{app:phase_prior_inst}

This appendix provides the concrete instantiations of the phase prior in Sec.~\ref{sec:3.3} for the experimental domains considered in this paper. The prior is intentionally lightweight: its role is to provide a coarse oscillatory scaffold consistent with source location, geometry, and an effective phase scale, rather than to reproduce the full target solution. In particular, the prior is used only through its angle in the main model, while transferable amplitude information is supplied separately by the coarse anchor.

The concrete instantiations used in the three benchmarks are summarized in Tab.~\ref{tab:domain_phase_prior}. For SimpleWave and Helmholtz, the prior retains only the dominant direct path, even though the SimpleWave data generator contains an additional reflected component. This choice keeps the prior deliberately simple and avoids reproducing the full simulator. For Maxwell, we retain a direct path together with one dominant reflected / mirrored path to capture the main geometric oscillatory pattern on the observation slice.

The quantities $v_{\mathrm{ref}}(e)$, $n_{\mathrm{ref}}(e)$, and $\epsilon_{\mathrm{ref}}(e)$ denote sample-wise effective propagation speed, Helmholtz medium coefficient, and permittivity, respectively. Here $L_m(\mathbf r,\mathbf r_s)$ denotes the geometric path length of the $m$-th retained path from source $\mathbf r_s$ to location $\mathbf r$. The carrier coefficient $k_0$ converts the benchmark-specific spectral variable into an effective path-accumulated phase scale, and $\kappa_{\mathrm{ref}}(e,\nu)$ then adapts this scale to the current sample through the corresponding medium summary.

\begin{table*}[h]
\centering
\small
\setlength{\tabcolsep}{4pt}
\renewcommand{\arraystretch}{1.1}
\caption{Physics-aware instantiations of the phase prior used in the experimental domains.}
\label{tab:domain_phase_prior}
\resizebox{\textwidth}{!}{%
\begin{tabular}{cccccc}
\toprule
Domain & Environment & $G_{\mathrm{prior}}$ & $\kappa_{\mathrm{ref}}(e,\nu)$ & $M$ & $a_m$ \\
\midrule
SimpleWave &
speed $v$ &
$\exp(i\,\kappa_{\mathrm{ref}}L_1)$ &
$k_0\,v_{\mathrm{ref}}^{-1}(e)$ &
1 &
$a_1=1$ \\
Helmholtz &
medium $n$ &
$\exp(i\,\kappa_{\mathrm{ref}}L_1)$ &
$k_0\,\sqrt{n_{\mathrm{ref}}(e)}$ &
1 &
$a_1=1$ \\
Maxwell &
medium $\epsilon$ &
$a_1\,\exp(i\,\kappa_{\mathrm{ref}}L_1)+a_2\,\exp(i\,\kappa_{\mathrm{ref}}L_2)$ &
$k_0\,\sqrt{\epsilon_{\mathrm{ref}}(e)}$ &
2 &
$a_1=1,\ a_2=1$ \\
\bottomrule
\end{tabular}
}
\end{table*}

\section{Additional Qualitative Results}
\label{app:additional_qualitative}

Figs.~\ref{fig:appendix_qualitative_hf20} and \ref{fig:appendix_qualitative_hf50} complement the HF100 comparison in Fig.~\ref{fig:hf100_qualitative} with smaller frequency gaps. At HF20, most baselines recover the coarse field layout more clearly than at HF100, but they still exhibit visible phase misalignment and local structural errors. APEX better preserves the dominant amplitude organization while maintaining more coherent phase patterns across the three benchmarks.

\begin{figure*}[h]
\centering
\includegraphics[width=\textwidth]{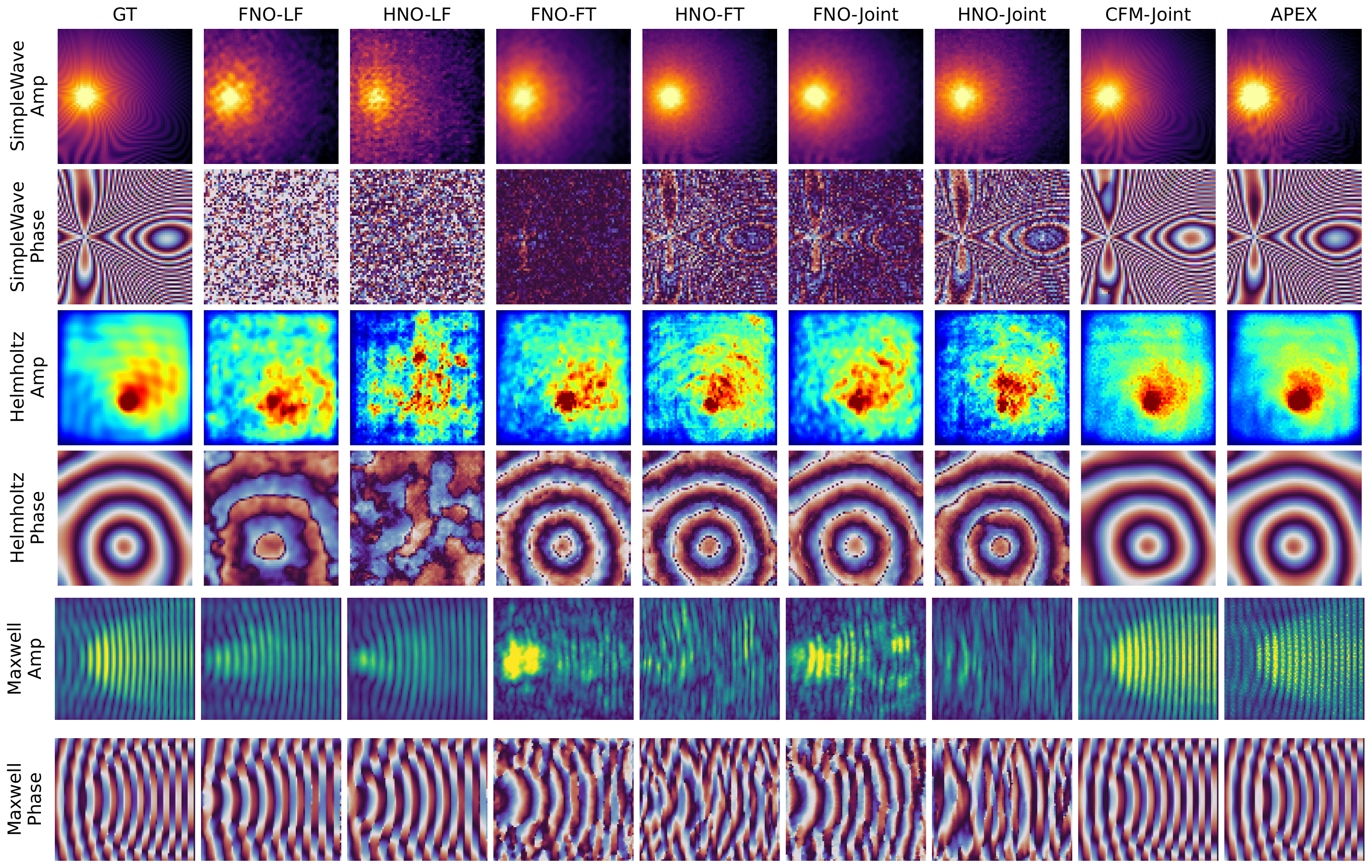}
\caption{Additional qualitative comparison at HF20.}
\label{fig:appendix_qualitative_hf20}
\end{figure*}

At HF50, the gap from the lower-frequency regime is larger and the difference between methods becomes more apparent. Direct extrapolation and jointly trained baselines often preserve plausible low-frequency structure but lose fine oscillatory alignment, whereas APEX remains closer to the ground truth in both amplitude localization and phase organization.

\begin{figure*}[h]
\centering
\includegraphics[width=\textwidth]{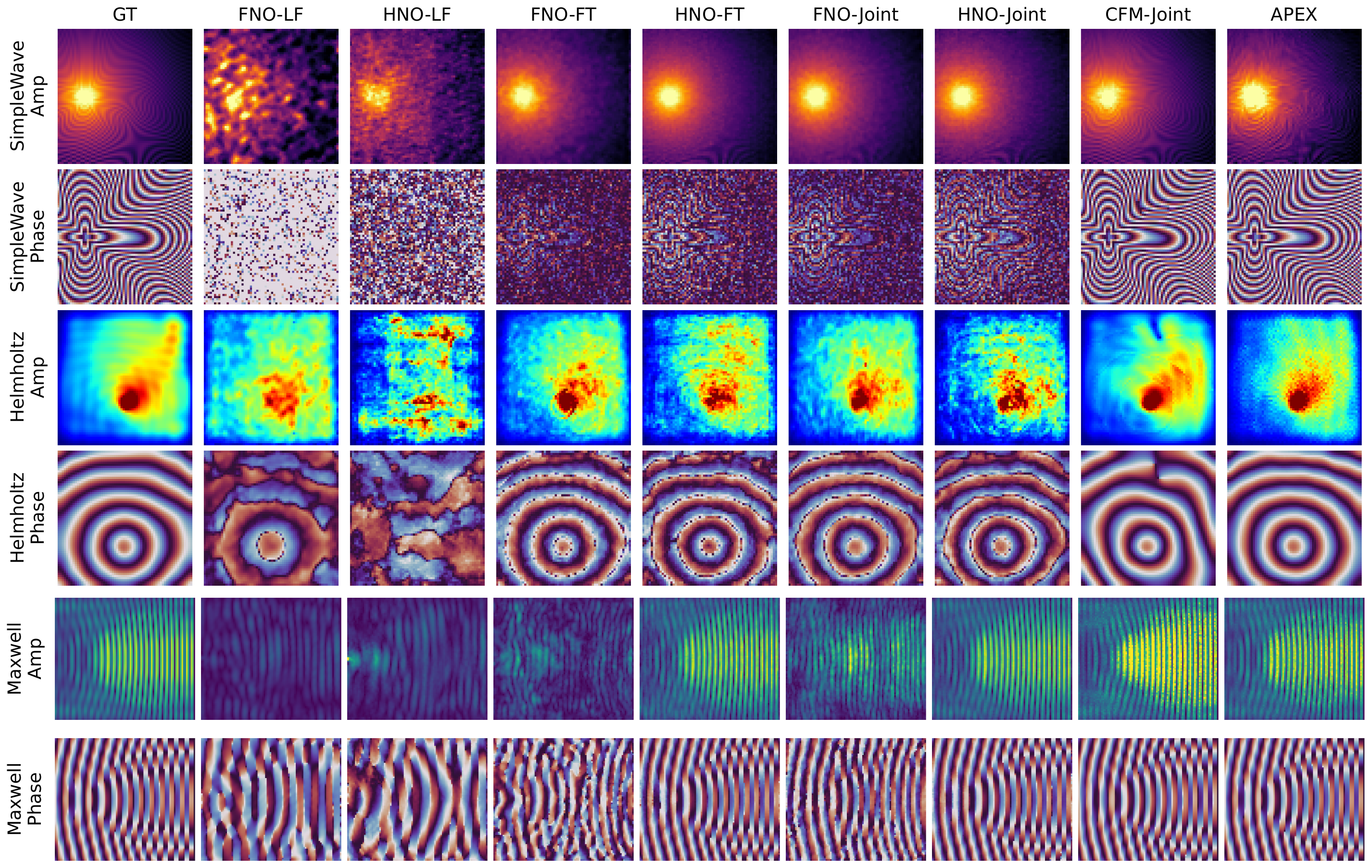}
\caption{Additional qualitative comparison at HF50.}
\label{fig:appendix_qualitative_hf50}
\end{figure*}

\section{Bootstrap Confidence Intervals for Main Results}
\label{app:bootstrap_main_results}

Table~\ref{tab:appendix-formal-multihf-bootstrap} complements Tab.~\ref{tab:main_results} by reporting uncertainty estimates for the higher-frequency prediction results. Each entry is shown as mean $\pm$ half-width of a 95\% non-parametric bootstrap confidence interval over the fixed HF test split, using $B{=}2000$ resamples. Following the main text, we report results for the three higher-frequency target groups, HF20, HF50, and HF100, on the SimpleWave, Helmholtz, and Maxwell benchmarks. For space reasons, the Overall summary column is omitted here. Lower $H^1$ and higher AWPC are better.

\begin{table*}[t]
\centering
\scriptsize
\setlength{\tabcolsep}{2.8pt}
\renewcommand{\arraystretch}{1.08}
\caption{Bootstrap confidence intervals for the higher-frequency prediction results across three benchmarks. Each target group reports mean $\pm$ half-width of a 95\% non-parametric bootstrap confidence interval for $H^1$/AWPC over the fixed HF test split ($B{=}2000$). Underlined entries denote the best result within each dataset and metric column.}
\label{tab:appendix-formal-multihf-bootstrap}
\begin{tabular}{lcccccc}
\toprule
\multicolumn{7}{c}{\textbf{SimpleWave}} \\
\midrule
\textbf{Method} & \multicolumn{2}{c}{\textbf{HF20}} & \multicolumn{2}{c}{\textbf{HF50}} & \multicolumn{2}{c}{\textbf{HF100}} \\
\cmidrule(lr){2-3}\cmidrule(lr){4-5}\cmidrule(lr){6-7}
& \textbf{$H^1 \downarrow$} & \textbf{AWPC$\uparrow$} & \textbf{$H^1 \downarrow$} & \textbf{AWPC$\uparrow$} & \textbf{$H^1 \downarrow$} & \textbf{AWPC$\uparrow$} \\
\midrule
FNO-LF & 1.377 $\pm$ 0.005 & 0.012 $\pm$ 0.001 & 1.454 $\pm$ 0.005 & 0.013 $\pm$ 0.001 & 1.429 $\pm$ 0.005 & 0.013 $\pm$ 0.001 \\
FNO-FT & 1.053 $\pm$ 0.010 & 0.157 $\pm$ 0.014 & 1.114 $\pm$ 0.018 & 0.238 $\pm$ 0.017 & 1.128 $\pm$ 0.011 & 0.133 $\pm$ 0.011 \\
HNO-LF & 1.359 $\pm$ 0.005 & 0.012 $\pm$ 0.001 & 1.420 $\pm$ 0.005 & 0.014 $\pm$ 0.001 & 1.401 $\pm$ 0.005 & 0.014 $\pm$ 0.001 \\
HNO-FT & 0.923 $\pm$ 0.032 & 0.496 $\pm$ 0.029 & 1.021 $\pm$ 0.033 & 0.465 $\pm$ 0.027 & 1.071 $\pm$ 0.031 & 0.400 $\pm$ 0.028 \\
FNO-Joint & 0.963 $\pm$ 0.026 & 0.403 $\pm$ 0.026 & 1.032 $\pm$ 0.033 & 0.438 $\pm$ 0.026 & 1.100 $\pm$ 0.027 & 0.324 $\pm$ 0.025 \\
HNO-Joint & 0.885 $\pm$ 0.038 & 0.540 $\pm$ 0.031 & 0.973 $\pm$ 0.039 & 0.531 $\pm$ 0.030 & 1.044 $\pm$ 0.034 & 0.433 $\pm$ 0.031 \\
CFM-Joint & 0.935 $\pm$ 0.048 & 0.620 $\pm$ 0.034 & 1.151 $\pm$ 0.046 & 0.516 $\pm$ 0.031 & 1.234 $\pm$ 0.030 & 0.351 $\pm$ 0.025 \\
APEX & \underline{0.396 $\pm$ 0.004} & \underline{0.909 $\pm$ 0.002} & \underline{0.494 $\pm$ 0.013} & \underline{0.865 $\pm$ 0.006} & \underline{0.804 $\pm$ 0.020} & \underline{0.659 $\pm$ 0.018} \\
\midrule
\multicolumn{7}{c}{\textbf{Helmholtz}} \\
\midrule
\textbf{Method} & \multicolumn{2}{c}{\textbf{HF20}} & \multicolumn{2}{c}{\textbf{HF50}} & \multicolumn{2}{c}{\textbf{HF100}} \\
\cmidrule(lr){2-3}\cmidrule(lr){4-5}\cmidrule(lr){6-7}
& \textbf{$H^1 \downarrow$} & \textbf{AWPC$\uparrow$} & \textbf{$H^1 \downarrow$} & \textbf{AWPC$\uparrow$} & \textbf{$H^1 \downarrow$} & \textbf{AWPC$\uparrow$} \\
\midrule
FNO-LF & 1.611 $\pm$ 0.008 & 0.169 $\pm$ 0.005 & 1.686 $\pm$ 0.008 & 0.098 $\pm$ 0.004 & 2.261 $\pm$ 0.029 & 0.050 $\pm$ 0.002 \\
FNO-FT & 0.762 $\pm$ 0.005 & 0.897 $\pm$ 0.002 & 0.783 $\pm$ 0.005 & 0.874 $\pm$ 0.003 & 0.966 $\pm$ 0.009 & 0.759 $\pm$ 0.005 \\
HNO-LF & 2.989 $\pm$ 0.027 & 0.097 $\pm$ 0.004 & 3.743 $\pm$ 0.041 & 0.042 $\pm$ 0.002 & 5.637 $\pm$ 0.097 & 0.032 $\pm$ 0.002 \\
HNO-FT & 0.756 $\pm$ 0.005 & 0.871 $\pm$ 0.002 & 0.909 $\pm$ 0.007 & 0.821 $\pm$ 0.003 & 1.276 $\pm$ 0.021 & 0.696 $\pm$ 0.006 \\
FNO-Joint & 0.715 $\pm$ 0.006 & 0.903 $\pm$ 0.002 & 0.779 $\pm$ 0.007 & 0.850 $\pm$ 0.003 & 0.899 $\pm$ 0.010 & 0.737 $\pm$ 0.006 \\
HNO-Joint & 0.930 $\pm$ 0.008 & 0.853 $\pm$ 0.002 & 1.006 $\pm$ 0.010 & 0.794 $\pm$ 0.003 & 1.401 $\pm$ 0.025 & 0.662 $\pm$ 0.005 \\
CFM-Joint & 0.852 $\pm$ 0.006 & 0.653 $\pm$ 0.014 & 0.895 $\pm$ 0.006 & 0.543 $\pm$ 0.015 & 0.930 $\pm$ 0.007 & 0.397 $\pm$ 0.016 \\
APEX & \underline{0.422 $\pm$ 0.010} & \underline{0.928 $\pm$ 0.003} & \underline{0.492 $\pm$ 0.013} & \underline{0.885 $\pm$ 0.005} & \underline{0.634 $\pm$ 0.019} & \underline{0.815 $\pm$ 0.008} \\
\midrule
\multicolumn{7}{c}{\textbf{Maxwell}} \\
\midrule
\textbf{Method} & \multicolumn{2}{c}{\textbf{HF20}} & \multicolumn{2}{c}{\textbf{HF50}} & \multicolumn{2}{c}{\textbf{HF100}} \\
\cmidrule(lr){2-3}\cmidrule(lr){4-5}\cmidrule(lr){6-7}
& \textbf{$H^1 \downarrow$} & \textbf{AWPC$\uparrow$} & \textbf{$H^1 \downarrow$} & \textbf{AWPC$\uparrow$} & \textbf{$H^1 \downarrow$} & \textbf{AWPC$\uparrow$} \\
\midrule
FNO-LF & 1.090 $\pm$ 0.009 & 0.066 $\pm$ 0.011 & 1.023 $\pm$ 0.002 & 0.003 $\pm$ 0.004 & 1.008 $\pm$ 0.000 & 0.000 $\pm$ 0.002 \\
FNO-FT & 0.840 $\pm$ 0.017 & 0.465 $\pm$ 0.020 & 0.899 $\pm$ 0.014 & 0.301 $\pm$ 0.022 & 0.954 $\pm$ 0.007 & 0.169 $\pm$ 0.013 \\
HNO-LF & 1.097 $\pm$ 0.007 & 0.044 $\pm$ 0.012 & 1.045 $\pm$ 0.003 & 0.012 $\pm$ 0.006 & 1.014 $\pm$ 0.001 & 0.002 $\pm$ 0.002 \\
HNO-FT & 0.912 $\pm$ 0.026 & 0.421 $\pm$ 0.030 & 0.947 $\pm$ 0.026 & 0.329 $\pm$ 0.032 & 0.962 $\pm$ 0.021 & 0.277 $\pm$ 0.031 \\
FNO-Joint & 0.904 $\pm$ 0.020 & 0.422 $\pm$ 0.024 & 0.933 $\pm$ 0.020 & 0.337 $\pm$ 0.025 & 0.959 $\pm$ 0.012 & 0.233 $\pm$ 0.018 \\
HNO-Joint & 0.842 $\pm$ 0.025 & 0.441 $\pm$ 0.031 & 0.900 $\pm$ 0.026 & 0.342 $\pm$ 0.032 & 0.948 $\pm$ 0.021 & 0.285 $\pm$ 0.031 \\
CFM-Joint & 0.701 $\pm$ 0.029 & \underline{0.828 $\pm$ 0.018} & 0.897 $\pm$ 0.033 & 0.680 $\pm$ 0.027 & 1.187 $\pm$ 0.034 & 0.411 $\pm$ 0.036 \\
APEX & \underline{0.697 $\pm$ 0.034} & 0.812 $\pm$ 0.022 & \underline{0.790 $\pm$ 0.031} & \underline{0.725 $\pm$ 0.022} & \underline{0.929 $\pm$ 0.029} & \underline{0.624 $\pm$ 0.026} \\
\bottomrule
\end{tabular}
\end{table*}

\section{Limitations}
\label{app:limitations}
A current limitation of APEX is its inference cost. Since the proposed enhancer is based on conditional flow matching, generating a higher-frequency field requires iterative sampling and is therefore slower than a single forward pass of a neural operator. In this work, we focus on improving prediction accuracy under scarce target-frequency supervision, and do not optimize the sampling efficiency in detail.

Another limitation is that we instantiate the lower-frequency backbone using FNO in the main APEX framework. The proposed amplitude-anchor design is not restricted to FNO in principle, since any lower-frequency surrogate that produces a coarse target-frequency prediction can be used to provide the amplitude anchor. A systematic evaluation of alternative neural-operator backbones and physics-aware surrogate architectures is left for future work.

Finally, the Green's-function-inspired phase prior used in this work is intentionally simplified. It provides source-aware and geometry-aware oscillatory cues, but does not fully model complex multipath effects, strong dispersion, or detailed boundary interactions. Future work may incorporate more accurate physics-based priors or data-calibrated phase models. 

\section{Broader Impact}
\label{app:broader_impact}

This work studies higher-frequency prediction for complex wave fields under limited target-frequency supervision. The proposed method may benefit scientific and engineering applications that require efficient wave-field prediction, such as acoustics, electromagnetics, and related simulation-driven design tasks. By reducing the amount of higher-frequency supervision required for adaptation, the method has the potential to lower computational and data acquisition costs in frequency-domain modeling.

We do not foresee direct negative societal impacts from this work. The study is methodological and is evaluated on controlled wave-field benchmarks. However, as with other surrogate modeling approaches, predictions should be used with caution in safety-critical applications, where additional validation against high-fidelity solvers, measurements, and domain-specific constraints is necessary.


\end{document}